\documentclass[11pt]{article}  % default square logo 
%\documentclass[12pt,beltcrest]{ociamthesis} % use old belt crest logo
%\documentclass[12pt,shieldcrest]{ociamthesis} % use older shield crest logo

%load any additional packages
%\usepackage{hyperref} %TODO: fix \url in ai-acp
\usepackage{amssymb}
\usepackage[table]{xcolor}
\usepackage{amssymb,amsmath,amsthm}
\usepackage{symbols}
\usepackage{natbib}
\usepackage{tikz}
\usepackage{subcaption}
\usepackage{cleveref}
\usepackage{float}
\usepackage{stmaryrd}
\usepackage{etoolbox}
\usepackage{symbols}

\usepackage{dm-colors}
\usepackage[decisionutilitycolor]{influence-diagrams}
\usepackage{thmtools}
\usepackage{tabularx}
\usepackage{booktabs}
\usepackage{xspace}
\usepackage{enumitem}
\usepackage{algorithm}
\usepackage{algpseudocode}
\usepackage{wrapfig}
\usepackage[letterpaper,left=1.7in,right=1.7in,top=1.45in,bottom=1.52in]{geometry}

\usepackage[font=small,labelfont=bf]{caption}

%\appto\appendix{\addtocontents{toc}{\protect\setcounter{tocdepth}{0}}}

\usetikzlibrary{positioning}

\bibliographystyle{plainnat}

\let\savedegree\degree %a hack that's required to import mathabx without errors
\let\degree\relax
\usepackage{mathabx}
\let\degree\savedegree

\newif\ifthesis
\thesisfalse  

\newtheorem{definition}{Definition}
\newtheorem{theorem}{Theorem}
\newtheorem{lemma}{Lemma}

\newtheorem{example}{Example}
\newtheorem{proposition}{Proposition}
\newcommand*{\ryan}[1]{\textcolor{brown}{}}
\newcommand*{\chris}[1]{\textcolor{blue}{}}
\newcommand*{\Chris}[1]{\textcolor{blue}{}}
\newcommand*{\tom}[1]{\textcolor{green}{}}

%input macros (i.e. write your own macros file called mymacros.tex 
%and uncomment the next line)
%\include{mymacros}

\usepackage{setup}

\title{Incentives for Responsiveness, Instrumental Control and Impact}
\author{%
  Ryan Carey\\
  University of Oxford
  \and
  Eric Langlois\\
  University of Toronto
  \and
  Chris van Merwijk\\
  Carnegie Mellon University
  \and
  Shane Legg\\
  Google DeepMind
  \and
  Tom Everitt\\
  Google DeepMind
}
\date{}

\begin{document}
\maketitle

\begin{abstract}
%TODO: replace abstract

We introduce three 
concepts that describe an agent's incentives: 
response incentives
indicate which variables in the environment, 
such as sensitive demographic information, 
affect the decision under the optimal policy.
%which entails whether an optimal agent will behave in a counterfactually 
%fair manner.
Instrumental control incentives 
indicate whether an agent's policy is chosen to manipulate part of its environment, 
such as the preferences or instructions of a user. %, and relate to notions of intent proposed by Halpern and Kleiman-Weiner
Impact incentives 
indicate which variables an agent will affect, intentionally or otherwise.
For each concept, we establish sound and complete graphical criteria, 
and discuss general classes of techniques that may be used 
to produce incentives for safe and fair agent behaviour.
Finally, we outline how these notions may be generalised to multi-decision settings.

This journal-length paper extends our conference publication ``Agent Incentives: A Causal Perspective'': the material on response incentives and instrumental control incentives is updated, while the work on impact incentives and multi-decision settings is entirely new.
\end{abstract}

\section{Introduction}\label{sec:ai-acp:introduction}
In order to understand whether or not it is in your interests to interact with another agent, 
it is useful to consider that agent's incentives. %\citep{oneil2016weapons,russell2019human}
In AI safety, for example, it has been argued that advanced AI systems would have an incentive to 
accumulate resources and/or to avoid being shut down \citep{omohundro2008basic,soares2015corrigibility}.
Such motives have been termed \emph{convergent instrumental goals}, because it is imagined that they might 
help a wide range of agents to achieve their goals.

The notion of a convergent instrumental goal has not been formally defined, however,
and it is not immediately clear how 
an agent's convergent instrumental goals should relate to its intent 
or incentives.

Ideally, we would like to have some language to describe 
the incentives of AI systems, that allows us to judge whether 
those incentives will lead to safe or fair behaviour.
There does already exist some language for describing safe or 
fair behaviour directly, for instance \emph{counterfactual harm} \citep{mueller2023personalized,richens2022counterfactual} and 
\emph{counterfactual fairness} \citep{kusner2017counterfactual}.
There also exists language that is at least related to incentives.
A variable is said to have \emph{positive value of information} if 
knowledge of its assignment can improve expected utility, 
and \emph{positive value of control} if deciding its assignment can 
do the same.
These concepts, however, do not directly allow us to assess whether 
an agent will behave in a safe or fair manner.
In the present work, therefore, we seek to devise some incentive concepts that:
\begin{itemize}
    \item make predictions about whether unsafe or unfair behaviour will occur, and
    \item describe how optimal behaviour is decided.
\end{itemize}
In the process, we hope to clarify the idea of an agent's convergent instrumental goals, 
and to contrast this with previous definitions of intentional influence 
of a variable.

In order for incentive concepts to be applicable, we need a way to deduce whether they are present or not.
In some cases, it is possible to rule out the presence of some incentive using the graphical structure alone.
For instance, in the graph $X \to D \to U$ where 
$X$ is a chance event, $D$ is a decision and $U$ is a utility function, 
we can tell that $X$ has zero value of information, because it is independent of $U$ given $D$.
A criterion for making such evaluations is called a \emph{graphical criterion}.
So, for each incentive concept that we introduce, we will establish a graphical criterion, 
and will discuss how it could be applied to ensure safer AI behaviour.

One might wonder, although our main application area in this paper is AI safety, might these incentive concepts 
be equally applicable to the behaviour of human individuals, or other agents?
In fact, none of these concepts are specific to AI but they may be more naturally applicable to 
AI systems insofar as they are trained to pursue closed-form objective functions, 
whereas this is a looser approximation of human behaviour.

\paragraph{Overview of Contributions}
%This paper formalizes incentive concepts using the language of 
%influence diagrams \citep{howard1990influence} and causal models \citep{pearl2009causality}.
This paper will begin with some setup (\cref{sec:ai-acp:setup}).

Next, we will focus on the information an agent can benefit from \emph{using}, to make a decision.
In previous work, \emph{materiality} has described which actual observations aid performance \citep{shachter2016decisions}. 
In \cref{sec:ai-acp:materiality}, we 
prove
 a known graphical criterion that can be used to deduce, in some circumstances, 
that a variable is immaterial \citep{fagiuoli1998note,lauritzen2001representing}.
We prove that this criterion is complete, in that it proves 
immateriality 
whenever possible to do so from the graphical structure alone.
%(This also implies a complete criterion for VoI in single-decision 
%graphs, as discussed in \cref{app:ai-acp:voi}.)

We then present a new concept, the \emph{response incentive} (RI) (\cref{sec:ai-acp:response}),
which describes which variables an agent's decision is influenced by, be they observed or causally upstream of the observations.
This is important to AI fairness, because it describes when an optimal agent will be counterfactually unfair \citep{kusner2017counterfactual}, 
and to AI safety, in that it relates to the obedience of an agent \citep{Hadfield-Menell2016osg,carey2023human}.
We also prove a simple graphical criterion that is sound and complete for ruling out an RI.

Next, we consider what variables an agent can benefit from \emph{influencing}.
%In previous work, the Value of Control \citep{shachter1986evaluating} describes variables that an agent 
%might benefit from controlling, and so in (\cref{app:ai-acp:voc}), we prove a sound and complete graphical criterion.
The notion of \emph{value of control} \citep{shachter1986evaluating} describes what variables an agent would like to control, but it falls short 
in describing what variables an agent is actually incentivized to control.
So we introduce a new concept, the \emph{instrumental control incentive} (ICI), 
which describes variables that an agent has both a need, and a means to influence 
(\cref{sec:ai-acp:fci}).
The instrumental control incentive attempts to formalize the notion of an \emph{instrumental goal} in AI safety, 
and the idea that an agent is incentivized to ``try'' to influence some variable.
We demonstrate that it is closely related to the notion of \emph{intent}, from \citet{halpern2018towards,ward2024reasons}, 
and prove an identical sound and complete criterion for each of these concepts
(\cref{sec:intent}).
We also review how various proposals for safe AI are better understood as one class of methods, 
\emph{path-specific objectives}, which serve to remove the ICI.
%It has previously been shown \citep{farquhar2022path}.

We will introduce another new concept, the \emph{impact incentive} (II) (\cref{sec:ai-acp:ii}), which is more inclusive than the ICI.
Under some circumstances, an agent may be incentivized to influence some variable, not by its intention, 
but as a side-effect of optimal behaviour.
So an II will apply to any variables subject to an ICI, as well as those affected by a predictable side-effect.
IIs also have a sound and complete graphical criterion, that is a superset of the criterion for ICI.
We will also discuss how impact incentives can make sense of the 
purpose of \emph{impact measures} \citep{armstrong2017low,krakovna2018penalizing}, 
another proposal for safe AI.
%\item 
%  \emph{Multi-decision incentives}:
%  We formalize ways that these RI, ICI, and II may be generalized to multi-decision settings.

We will then discuss various possible generalizations of incentive concepts to a multi-decision setting, 
and how they relate to one another (\cref{sec:ai-acp:multi-decision}).

Finally, we review related work (\cref{sec:ai-acp:related-work}), and conclude (\cref{sec:ai-acp:conclusion}).

This paper is an extended version of a conference paper, \citet{everitt2021agent}.
Since its publication, the concepts have already aided
understanding of incentive problems such as an agent's redirectability \citep{armstrong2020pitfalls,carey2023human}, 
ambition \citep{cohen2020unambitious}, fairness \citep{ashurst2022fair} tendency to tamper with reward 
\citep{everitt2021reward}, manipulativeness \citep{farquhar2022path}, the definition of an agent \citep{kenton2023discovering}, 
and more \citep{everitt2019modeling,langlois2021rl}.
Compared to that paper, \cref{sec:ai-acp:materiality,sec:ai-acp:response,sec:ai-acp:fci}
have been generalized to deal with multiple variables.
Analyses of intent and path-specific objectives have been newly added to \cref{sec:ai-acp:fci}.
Finally, \cref{sec:ai-acp:ii,sec:ai-acp:multi-decision} are entirely new.

%Most of our analysis focuses on influence diagrams with a single-decision, 
%but we consider multi-decision settings in \cref{sec:ai-acp:multi-decision}.
%This single-decision setting is adequate to model supervised learning, (contextual)
%bandits, and the choice of a policy in an MDP. Previous work has also discussed ways
%to transform a multi-decision setting into a single-decision setting by imputing policies
%to later decisions \citep{shachter2016decisions}.

\begin{figure*}
\begin{subfigure}[t]{\textwidth}
    \centering
      \begin{influence-diagram}
    \setrectangularnodes
    \setcompactsize
    \tikzset{node distance=3.5mm and 5.5mm}

\node (R) [] {Race};
      \node (S) [right= of R] {High school};
      \node (E) [right= of S] {Education};
      \node (Gr) [right=of E] {Grade};
\node (D) [below=of S,decision] {Predicted grade};
      \node (Ge) [left= of D] {Gender};

      \node (U) at (Gr|-D) [utility] {Accuracy};

    \draw[->]
      (R) edge (S)
      (S) edge (E)
      (S) edge (D)
      (E) edge (Gr)
      (Gr) edge (U)
      (Ge) edge (D)
      (D) edge (U)
    ;
\end{influence-diagram}
 \caption{Fairness example: grade prediction}\label{fig:race-preview}
  \end{subfigure}\vspace*{5mm}
  \begin{subfigure}[t]{\textwidth}
      \centering
      \hspace*{2cm}
      \begin{influence-diagram}
  \setrectangularnodes
  \setcompactsize
  \tikzset{node distance=4mm and 3.5mm}
  \node (D) [decision, anchor=west] {Posts\\ to show};
  %\node (M) [left = of D] {Model of\\ original opinions};
  \node (P1) [above left =3.5mm and -7mm of D] {Original\\ user opinions};
  \node (U) [right = of D, utility] {Clicks};
  \node (P2) at (U|-P1) {Influenced\\ user opinions};
\draw[->]
    %(P1) edge (M)
    (P1) edge (D)
    (P1) edge (P2)
    (D) edge (P2)
    (D) edge (U)
    (P2) edge (U)
  ;
\end{influence-diagram}
\hspace*{0.5cm}
    \begin{influence-diagram}
    \cidlegend[]{
      \legendrow{}{structure\! node} \\
      \legendrow{decision}{decision node}\\
      \legendrow{utilityc, chamfered rectangle xsep=1.5pt, chamfered rectangle ysep=1.5pt}{utility node}\\
}
\end{influence-diagram}
\hspace*{-2cm}
%\hspace{-1cm}
       \caption{Safety example: content recommendation}\label{fig:fci-preview}
  \end{subfigure}
  %\hspace{0mm}
  % \begin{subfigure}[t]{0.16\textwidth}
  % \end{subfigure}
  \caption[Grade predictor and content recommendation examples]{Two examples of decision problems represented as causal influence diagrams.
      In a) a predictor at a hypothetical university aims to estimate a student's grade, using as inputs
      their gender and the high school they attended.
      We ask whether the predictor is incentivized to behave in a discriminatory manner
      with respect to the students' gender and race.
      In this hypothetical cohort of students, performance is assumed to be a function of the quality of the high school education they received.
      A student's high school is assumed to be impacted by their race, and can affect the quality of their education.
      Gender, however, is assumed not to have an effect.
      In b) the goal of a content recommendation system is to choose posts that will maximize the user's click rate.
      However, the system's designers prefer the system not to manipulate the user's opinions in order
      to obtain more clicks.~\looseness=-1
      }
\label{fig:cid-preview}
\end{figure*}

\paragraph{Running examples}
For explanatory purposes, we will refer to the following pair of incentive design problems throughout the paper:

\begin{example}[Grade prediction]
To decide which applicants to admit, a university uses a model to predict the grades of new students.
The university would like the system to predict accurately, without treating students differently based on their gender or race
(see \cref{fig:race-preview}).~\looseness=-1 \end{example}

\begin{example}[Content recommendation]
An AI algorithm has the task of recommending a series of posts to a user.
The designers want the algorithm to present content adapted to each user's interests to optimize clicks.
However, they do not want the algorithm to use polarizing content to manipulate the user into clicking more predictably
(\cref{fig:fci-preview}).~\looseness=-1
\end{example}

\section{Setup}\label{sec:ai-acp:setup}
We will begin with a recap of
structural causal models and then introduce structural causal influence models.

\subsection{Structural causal models} \label{subsec:scm}

Structural causal models (SCMs) \citep{pearl2009causality}
are a type of causal model where all randomness is consigned to exogenous
variables, while deterministic structural functions relate the
endogenous variables to each other and to the exogenous ones.
As demonstrated by \citet{pearl2009causality}, this structural approach
has significant benefits over traditional causal Bayesian networks for
analysing (nested) counterfactuals and ``individual-level'' effects.

\begin{definition}[Structural causal model (unconfounded); {\citealp[Chapter 7]{pearl2009causality}}]\label{def:ai-acp:scm}
    A \emph{structural causal model} is a tuple
    $\langle \exovars, \evars, \structfns, P\rangle$, where $\exovars$ is a set of exogenous
    variables; $\evars$ is a set of endogenous variables;
    and $\structfns= \setfor{\fv{\evar}}{\evar \in \evars}$ is a collection of
    functions, one for each $\evar$.
    Each function $\fv{\evar}\colon \dom(\Pav{\evar} \cup \{\exovarv{\evar}\}) \to \dom({\evar})$
    specifies the value of $\evar$ in terms of
    the values of the corresponding exogenous variable $\exovarv{\evar}$
    and a set of variables $\Pav{\evar} \subset \evars$, where
    these functional dependencies are acyclic.%
    \footnote{The reason for using the notation $\Pav{\evar}$ to designate this set of variables will become clear 
    when we introduce the ``associated DAG'' later in this subsection.}
    The domain of a variable $\evar$ is $\dom(\evar)$,
    and for a set of variables, ${\dom(\sW) := \bigtimes_{W \in \sW}{\dom(W)}}$.
The uncertainty is encoded through a probability distribution $P(\exovals)$ such that the exogenous variables are mutually independent.~\looseness=-1
\end{definition}

For example, \cref{fig:counterfactual1} shows an SCM that models how
\emph{posts} ($D$) can influence a user's \emph{opinion} ($O$) and \emph{clicks}
($U$).

The exogenous variables $\exovars$ of an SCM represent factors that are not
modelled.
For any value $\exovars = \exovals$ of the exogenous variables, the value of
any set of variables $\sW \subseteq \evars$ is given by recursive application of
the structural functions $\structfns$ and is denoted by $\sW(\exovals)$.
Together with the distribution $\exoprob(\exovals)$ over exogenous variables, this induces a
joint distribution ${\Pr(\sW = \sw)}
= \sum_{\{\exovals|\sW(\exovals)=\sw\}}{\exoprob(\exovals)}$.~\looseness=-1

Note that in general, we denote individual variables by capital letters, 
and sets of variables by bolded capital letters.
Individual (sets of) assignments will be represented by 
(bolded) lowercase. \label{pg:irici:variables}

SCMs model \emph{causal interventions} that set variables to particular values.
These are defined via submodels:

\begin{definition}[Submodel; {\citealp[Chapter 7]{pearl2009causality}}]\label{def:ai-acp:submodel}
Let $\scm = \scmdef$ be an SCM, $\sX$ a set of variables in $\evars$, and
$\sx$ a particular realization of $\sX$.
The submodel $\scm_\sx$ represents the effects of an \emph{intervention}
$\Do(\sX=\sx)$,
and is formally defined as the SCM
${\langle \exovars, \evars, \structfns_\sx, \exoprob \rangle}$,
where
    ${\structfns_\sx = \{\fv{\evar} | \evar \notin \sX\} \cup
    {\{\sX = \sx\}}}$.
That is to say, the original functional relationships of $X \in \sX$ are
replaced with the constant functions $X = x$.
\end{definition}

More generally, a \emph{soft intervention} on a variable $X$ \label{pg:irici:soft-intervention}
in an SCM $\scm$ replaces $\fv{X}$ with a
function $\gx\colon \dom(\Pav{X} \cup \{\exovarv{X}\}) \to \dom(X)$
\citep{eberhardt2007interventions,tian2013causal}.
The probability distribution $\Pr(\sW_{\gx})$
on any $\sW \subseteq \evars$
is defined as the value of $\Pr(\sW)$ in the submodel $\scim_{\gx}$
where $\scim_{\gx}$ is $\scim$
modified by replacing $\fv{X}$ with $\gx$.

If $W$ is a variable in an SCM $\scim$, then $W_{\sx}$ refers to the same variable
in the submodel $\scim_{\sx}$, and is called a \emph{potential response variable}.
In \cref{fig:counterfactual1}, the random variable $O$ represents user opinion
under ``default'' circumstances,
while $O_d$ in \cref{fig:counterfactual3} represents the
user's opinion given an intervention $\doo(D=d)$ on the content posted. 
Note also how the intervention on $D$ severs the link from $\exovarv{D}$ to $d$
in \cref{fig:counterfactual3}, as the intervention on $D$ overrides the causal
effect from $D$'s parents.
Throughout this paper we use subscripts to indicate submodels or interventions,
and superscripts for indexing. \label{pg:irici:doo}

\begin{figure*}
  \centering
\newlength{\mysubfigwidth}
  \pgfmathsetlength{\mysubfigwidth}{40mm}
  \captionsetup[subfigure]{oneside,margin={0mm,0mm}}
  \begin{subfigure}[t]{0.3\textwidth}
    \centering
    \begin{influence-diagram}
      \setcompactsize
      \setinnersep{0.5mm}
      \node (D) [decision] {$D$};
      \node (X) [below =7mm of D] {$O$};
      \node (Y) [below =7mm of X,utility] {$U$};
        \node (ed) [left = 3mm of D, exogenous] {$\exovarv{D}$};
        \node (ex) [left = 3mm of X, exogenous] {$\exovarv{O}$};
        \node (ey) [left = 2.5mm of Y, exogenous] {$\exovarv{U}$};
      \path
      (ed) edge[->] (D)
      (ex) edge[->] (X)
      (ey) edge[->] (Y)
      (D) edge[->] (X)
      (X) edge[->] (Y)
      ;
      \draw[->] (D) to[in=40,out=-40,looseness=.6] (Y);
      \begin{scope}[
        every node/.style={draw=none,rectangle,align=center,inner sep=0mm}
        ]
        \scriptsize
\node[right = 3mm of X] {Opinion \\ \textcolor{dmblue500}{$O = f_O(D, \exovarv{O})$}};
        \node[right = 0mm of Y] {Clicks\\ \textcolor{dmblue500}{$U \!=\! f_U(\! D, O,\exovarv{U}\!)$}};
      \end{scope}
      \node (h2) [below = 4mm of Y, inner sep=0mm, minimum size=0mm] {};
    \end{influence-diagram}
    %\vspace{-4mm} %TODO: change to -7mm?
    \caption{SCIM}\label{fig:scim-example}
\end{subfigure}
  %\hfill
  \begin{subfigure}[t]{0.3\textwidth}
    \centering
    \begin{influence-diagram}
      \setcompactsize
      \setinnersep{0.5mm}
      \node (D) [] {$D$};
      \node (X) [below =7mm of D] {$O$};
      \node (Y) [below =7mm of X] {$U$};
        \node (ed) [left = 3mm of D, exogenous] {$\exovarv{D}$};
        \node (ex) [left = 3mm of X, exogenous] {$\exovarv{O}$};
        \node (ey) [left = 3mm of Y, exogenous] {$\exovarv{U}$};
      \path
      (ed) edge[->] (D)
      (ex) edge[->] (X)
      (ey) edge[->] (Y)
      (D) edge[->] (X)
      (X) edge[->] (Y)
      ;
      \draw[->] (D) to[in=40,out=-40,looseness=.6] (Y);
      \begin{scope}[
        every node/.style={draw=none,rectangle,align=center,inner sep=0mm}
        ]
        \scriptsize
        \node[right = 2mm of D] {Posts \\ \textcolor{dmblue500}{$D = \pi(\exovarv{D})$}};
        \node[right = 3mm of X] {Opinion \\ \textcolor{dmblue500}{$O = f_O(D, \exovarv{O})$}};
        \node[right = 0mm of Y] {Clicks\\ \textcolor{dmblue500}{$U \!=\! f_U(\!D, O,\exovarv{U}\!)$}};
      \end{scope}
      \node (h2) [below = 4mm of Y, inner sep=0mm, minimum size=0mm] {};
    \end{influence-diagram}
    %\vspace{-4mm} %TODO: change to -7mm?
    \caption{SCM}\label{fig:counterfactual1}
\end{subfigure}\hspace{-1mm}
  % \hfill\vspace{3mm}
  % \pgfmathsetlength{\mysubfigwidth}{64.5mm}
  % \captionsetup[subfigure]{oneside,margin={0cm,0cm}}
  \begin{subfigure}[t]{0.39\textwidth}
    \centering
    \begin{influence-diagram}
      \setcompactsize
      \setinnersep{0.5mm}
      \node (D) [] {$D$};
      \node (X) [below =7mm of D] {$O$};
      \node (Y) [below =7mm of X] {$U$};
      \node (ed) [left =2.5mm of D, exogenous] {$\exovarv{D}$};
      \node (ex) [left =2.5mm of X, exogenous] {$\exovarv{O}$};
      \node (ey) [left =2.5mm of Y, exogenous] {$\exovarv{U}$};
      \node (d) [double,left = of ed] {$d$};
      \node (Xd) [left = of ex] {$O_d$};
      \node (YXd) [left = of ey] {$U_{O_d}$};
      \path
      (ed) edge[->] (D)
      (ex) edge[->] (X)
      (ex) edge[->] (Xd)
      (ey) edge[->] (Y)
      (ey) edge[->] (YXd)
      (D) edge[->] (X)
      (d) edge[->] (Xd)
      (X) edge[->] (Y)
      (Xd) edge[->] (YXd)
      (D) edge[->,bend left] (Y)
      ;
      \node (h1) [right = 3mm of Y, inner sep=0mm, minimum size=0mm] {};
      \node (h2) [below = 3mm of Y, inner sep=0mm, minimum size=0mm] {};
      \path (D) edge[out=-55,in=90] (h1)
      (h1) edge[out =-90, in=0] (h2)
      (h2) edge[->, out =180, in=-15] (YXd)
      ;
      \begin{scope}[node distance=0.1cm and 0.1cm,
        every node/.style={ draw=none, rectangle,align=center,
          inner sep=0mm }
        ]
        \scriptsize
        \node [left = of d] {Posts \\ \textcolor{dmblue500}{$d=\text{apolitical}$ }};
        \node [left = of Xd] { Opinion\\
          \textcolor{dmblue500}{$O_d = f_O(d, \exovarv{O})$}};
          \node [left =0mm of YXd] {Clicks\\
          \textcolor{dmblue500}{$U_{O_d} \!=\!$} \\\textcolor{dmblue500}{$f_U(\!D,O_d,\exovarv{U}\!)$}};
        \end{scope}

    \end{influence-diagram}
\vspace{0mm} %TODO: maybe -3mm?
\caption[Nested counterfactual in twin graph]{SCM with nested counterfactual, shown using a twin graph \citep{balke2022probabilistic}}\label{fig:counterfactual3}
\end{subfigure}
%\hspace{3mm}
\begin{influence-diagram}
    \cidlegend[]{
%      \legendrow{exogenous}{exogenous node} \\
%      \legendrow{}{structural\! node} \\
%      \legendrow{double}{intervened node} \\
%      \legendrow{decision}{decision node}\\
%      \legendrow{utilityc, chamfered rectangle xsep=1.5pt, chamfered rectangle ysep=1.5pt}{utility node}\\
  \node () [draw, circle, exogenous] {}; \&
  \node [draw=none, rectangle] {exogenous node}; \&
  \node () [draw, circle] {}; \&
  \node [draw=none, rectangle] {structural node}; \&
  \node () [draw, circle, double] {}; \&
  \node [draw=none, rectangle] {intervened node}; \\
  \node () [draw, circle, decision] {}; \&
  \node [draw=none, rectangle] {decision node}; \&
  \node () [draw, circle, utilityc, chamfered rectangle xsep=1.5pt, chamfered rectangle ysep=1.5pt] {}; \&
  \node [draw=none, rectangle] {utility node}; \\
}

\end{influence-diagram}
  \caption[Interventions in a structural causal influence model]{
    An example of a SCIM and interventions.
    In the SCIM, either political or apolitical posts $D$ are displayed.
    These affect the user's opinion $O$. $D$ and $O$ influence the user's clicks $U$ (a).
    Given a policy, the SCIM becomes an SCM (b).
    Interventions and counterfactuals may be defined in terms of this SCM.
    For example, the nested counterfactual $U_{O_d}$ represents
    the number of clicks if the user has the opinions that they would
    arrive at, after viewing apolitical content (c).}
  \label{fig:counterfactual}

\end{figure*}

More elaborate hypotheticals can be described with
a nested counterfactual. \label{pg:irici:nested}
In a nested counterfactual, the intervention is itself a potential response 
variable.
For instance, in \cref{fig:counterfactual3}, 
we may be interested in what the utility would be if the user's \emph{opinions} 
assumed the value that they would take given some alternative \emph{posts}.
Put differently, we would like to propagate the effect of an intervention $\doo(D=d)$ 
to $U$, only via the opinions $O$.
To define a nested counterfactual, firstly, 
the value $o=O_d(\exovals)$ indicates the user's opinion after receiving a 
default post $D=d$, 
given an assignment $\exovals$ to the exogenous variables. 
Then, the effect of the intervention $\doo(O=o)$ on the user's \emph{clicks} $U_{O_d}$
is defined as $U_{O_d}(\exovals) \coloneqq U_{o}(\exovals)$ for any assignment $\exovals$.

A structural causal model has an \emph{associated DAG}
that can be used to deduce which variables are 
conditionally independent.
Formally, the induced graph has vertices $\evars$ and
an edge inbound to each variable $\evar$ from each 
variable that $f_\evar$ depends on.
For example, in \cref{fig:counterfactual1}, the dependencies 
of the functions $\pi,f_O,f_U$ are illustrated.
In \cref{def:ai-acp:scm}, we designated the variables that $\evars$ 
depends on as $\Pav{\evar}$,
and this is because they are the parents of $\evar$ in the associated DAG.
In fact, for any DAG, we will use the same notation $\Pav{\evar}$ to designate the parents of a variable 
$\evar$, and similarly $\Descv{\evar}$ to designate the descendants. \label{pp:irici:padesc}
We will use some more standard notation for DAGs: 
an edge from node $V$ to node $Y$ is denoted $V \to Y$, and
a directed path (of length at least zero) is denoted $V \pathto Y$.
%For sets of variables, $\sV \pathto \sY \pathto \sZ$ means that 
%there is some $V \in \sV$, $Y \in \sY$, $Z \in \sZ$ such that
%$V \pathto Y \pathto Z$ holds.~\looseness=-1
%\ryan{Do I actually want to use this bold variable notation, or should I just state it explicitly each time?}

The d-separation criterion can be used to deduce when two sets
of variables are independent, conditional on another variable.

\begin{definition}[d-separation; \citealp{Verma1988soundness}]\label{def:ai-acp:d-separation}
    A path $p$ is said to be d-separated by a set of nodes $\sZ$ if and only if:
    \begin{enumerate}
    \item $p$ contains a collider $X \to W \gets Y$ such that the middle node $W$ is not in $\sZ$ and no descendants of $W$ are in $\sZ$, or
    \item $p$ contains a chain $X \to W \to Y$ or fork $X \gets W \to Y$ where $W$ is in $\sZ$, or
    \item one or both of the endpoints of $p$ is in $\sZ$.
    \end{enumerate}
    A set $\sZ$ is said to d-separate $\sX$ from $\sY$,
written ${(\sX \perp \sY \mid \sZ)}$, if and only if $\sZ$ d-separates every path
from a node in $\sX$ to a node in $\sY$. Sets that are not d-separated are
called d-connected.~\looseness=-1
\end{definition}

When d-separation holds,
these sets of variables must be independent given the third.
Conversely, when variables are d-connected in a graph, 
then there exists a model with that induced graph 
such that they are conditionally dependent.

\begin{theorem}[Theorem 1.2.4 of \citet{pearl2009causality}]
If sets $\sX,\sY,\sZ$ satisfy $\sX \perp \sY \mid \sZ$ in a DAG $\calG$, 
then $\sX$ is independent of $\sY$ conditional on $\sZ$ in every 
SCM $\calM$ with induced graph $\calG$.
Conversely, if $\sX \not\perp \sY \mid \sZ$ in a DAG 
$\calG$, then $\sX$ and $\sY$ are dependent conditional on $\sZ$ in at least one SCM $\calM$ with induced graph $\calG$.
\end{theorem} 
%\ryan{Find a better cite for this? Abbreviate to one line with <=>?}

Indeed, when variables are d-connected, they are actually 
conditionally dependent in almost all models with that induced 
graph \citep{meek1995strong}.

\subsection{Structural causal influence models}
Influence diagrams are graphical models 
with special decision and utility nodes,
used to model decision-making problems \citep{howard1990influence,lauritzen2001representing}, 
but that usually do not deal with counterfactual 
concepts as do SCMs \citep{heckerman1994decision}.
So for our analysis, we introduce a hybrid of SCMs and influence diagrams
called the \emph{structural causal influence model} 
(SCIM, pronounced ``skim'').
This model, originally proposed by \citet{dawid2002influence}, 
is essentially an SCM where particular variables 
are designated as decisions and utilities.
The decisions lack structural functions, until one is 
selected by an agent.\footnote{Dawid called this a ``functional influence diagram''.
  We favour the term SCIM, because the term ``SCM''
  is more prevalent than the corresponding term ``functional model''.
}

\begin{definition}[Structural causal influence model]\label{def:ai-acp:scim}

    A \emph{structural causal influence model} (SCIM)
is a tuple $\scim = \scimdef$ where:
\begin{itemize}
  \item $\langle \exovars, \evars, \structfns', \exoprob \rangle$ is an unconfounded SCM, and $\structfns = \structfns' \setminus \structfns_{\decisionvars}$ consists of the structural functions 
  from that SCM, except those belonging to a set $\decisionvars \subseteq \evars$, called \emph{decision variables}.
  \item The \emph{utility variables} $\utilvars$ are a  
  subset of $\evars \setminus \decisionvars$, and have real domains, $\dom(U) \subseteq \mathbb{R}$ for all $U \in \utilvars$. By convention, we 
  require that utility variables have no children in the associated DAG.
  \item The observation function $\mathcal{O}$ maps each decision 
  variable $\decisionvar \in \decisionvars$ to a set of \emph{observed variables} $O \subseteq \evars \setminus \utilvars$, intuitively, the variables that $D$ can depend on.
\end{itemize}
\end{definition}
Those endogenous variables that are neither decisions 
nor utilities are called \emph{structural variables}, $\structvars := \evars \setminus (\decisionvars \cup \utilvars)$

A SCIM entails an acyclic relationship between all of its variables, which can be represented by a DAG.
The observation function $\mathcal{O}$ 
indicates which variables are available as inputs to any given 
decision variable --- these will be the parents.
For non-decision variables, the parents are implied 
by the structural functions $\structfns$, which indicate
the variable's direct causes.

\footnote{
In the study of structural causal models, 
the variables that are not exogenous are often called ``visible'' 
and a joint distribution over visible variables is available to the decision-maker.
In a SCIM, the decision-maker instead has access to 
the SCIM tuple, along with assignments to observations.
We therefore avoid referring to any nodes as ``visible''.}
Taken together, these allow us to associate any SCIM 
with an influence diagram --- a DAG that illustrates these 
dependencies, as well as the types of each variable.

\begin{definition}[Causal influence diagram]\label{def:ai-acp:cid}
The \emph{causal influence diagram} (CID) of a SCIM
is a graph
whose vertices are the decision nodes $\decisionvars$, 
structure nodes $\structvars$, and utility nodes $\utilvars$,
and whose edges go
from observations $\mathcal{O}(D)$ to each decision $D$
and from variables that $f^V$ depends on, to each non-decision $V$.
\end{definition}

The arcs into each decision are ``informational'' 
in that they indicate which parents of the decision will 
be observed by the decision maker at the time that decision is selected \citep{shachter2016decisions}.
We will focus exclusively on SCIMs whose CID is acyclic.

An example of a SCIM for the content recommendation example
is shown in \cref{fig:scim-example}, and the node types 
of the CID are highlighted in a standard style --- 
the decision nodes as rectangles, and the utilities as diamonds.

In single-decision SCIMs,
the decision-making task is to maximize expected utility by selecting
a decision $d\in \dom(D)$ for each possible assignment to the observations $o \in \dom(\mathcal{O}(D))$, i.e.\ 
to select a \emph{decision rule}
$\pi^D: \dom(\mathcal{O}(D) \cup \{\exovarv{\decisionvar}\}) \to \dom(\decisionvar)$.
The exogenous variable $\exovarv{D}$ provides randomness to allow
the decision rule to be a stochastic function of the 
observations $\mathcal{O}(D)$.%
\footnote{
Ideally, we might want the decision-maker to be able to implement 
\emph{any} stochastic policy.
This could be done by having $\exovarv{\decisionvar}$ be a continuous random variable. 
However, this would introduce measure theoretic complications that 
are not pertinent to the analysis in this paper, and so we 
defer that construction to future work.
}
If there are multiple decisions, the task is to select
a \emph{policy} $\spi = \{\pi^D\}_{D \in \sD}$, i.e.\ 
one decision rule for each decision. \label{pg:irici:policy}
Specifying a policy turns a SCIM $\scim$ into an SCM
$\scim_\spi := \langle \exovars, \evars, (\structfns \setminus \structfns_{\decisionvars}) \cup \spi, \exoprob
\rangle$.
In the resulting SCM, the standard definitions of causal interventions apply.

We use $\Prs{\spi}$ and $\EEs{\spi}$ to denote probabilities and expectations
with respect to $\scims{\spi}$. \label{pg:irici:ppi}
For a set of variables $\sX$ not in $\Descv{\decisionvar}$,
$\Prs{\spi}(\sx)$ is independent of $\spi$ and we simply write $\Pr(\sx)$.
An \emph{optimal policy} for a SCIM is defined as any policy $\spi$
that maximizes $\EEs{\spi}[\totutilvar]$, where
$\totutilvar \coloneqq \sum_{\utilvar \in \utilvars}{\utilvar}$.
The potential response $\totutilvar_\sx$ is defined as \label{pg:irici:total-utility}
$\totutilvar_\sx \coloneqq \sum_{\utilvar \in \utilvars}{\utilvar_\sx}$.~\looseness=-1 
In most of the examples that we consider, there will only be one decision, 
and so by slight abuse of notation, we will 
denote the policy $\spi = \{\pi\}$ by $\pi$.

Finally, let us clarify why a CID is called ``causal''.
For an ordinary influence diagram, 
one can deduce that only the descendants of a decision 
are caused by it, because their values depend on the chosen policy \citep{heckerman1994decision}.
In a CID, however, imputing a policy 
recovers a structural causal model, which 
represents a full description of causal relationships between variables.
The direction of causality then corresponds to the direction of 
arrows in the associated DAG. 
Since these arrows are the same as those in the original CID, 
we may also call the CID \emph{causal}.

\section{Materiality} \label{sec:ai-acp:materiality}
A fundamental question that we may ask about the optimal policies
is: which observations do they need in order to make optimal decisions?
If some observation is discovered to be \emph{immaterial} \citep{shachter2016decisions}, 
this would allow us to narrow 
the search for optimal policies.
Conversely, if an observation is \emph{material}, this means it will directly influence the decision under every
optimal policy.\footnote{In contrast to subsequent sections, the results in this section and
  the VoI section do not require the influence diagrams to be causal.}~\looseness=-1

\begin{definition}[Materiality; \citealp{shachter2016decisions}] \label{def:irici:materiality}
  For any given SCIM $\scim$, let
  $\attutil(\scim)=\max_{\pi}\EEs{\pi}[\totutilvar]$ be the maximum attainable
  utility in $\scim$, and let $\scim_{\incentivevar \not \to D}$ be the modified version of $\scim$ 
  obtained by removing the information links from $\incentivevar$ to $D$.
  The observation $\incentivevar\subseteq \Pad$ is \emph{material} if
  $\attutil(\scim_{\incentivevar \not \to D})
  <
  \attutil(\scim)
  $.
\end{definition}
 
Nodes may often be identified as immaterial
based on the graphical structure alone \citep{fagiuoli1998note,lauritzen2001representing,shachter2016decisions}.
According to the graphical criterion of \citet{fagiuoli1998note}, an observation
cannot provide useful information if it is d-separated from utility, conditional on
other observations.
This condition is called \emph{non-requisiteness}.

\begin{definition}[Non-requisite observation; \citealp{lauritzen2001representing}]
  \label{def:ai-acp:requisite-observation}
  Let $\utilvarsd := \utilvars\cap\Descv{\decisionvar}$ be
  the utility nodes downstream of $\decisionvar$.
  An observation $\incentivevar\in \Pad$ in a single-decision CID $\causalgraph$ is
  \emph{non-requisite} if:  \begin{equation}
    \label{eq:voi-criterion}
    \incentivevar \perp \utilvars^D \bmid \left(\Pad \cup \{\decisionvar\} \setminus \{\incentivevar\}\right).
  \end{equation}
  In this case, the edge $\incentivevar\to \decisionvar$ is also called non-requisite.
  Otherwise $\incentivevar$ and $\incentivevar\to \decisionvar$ are \emph{requisite}.
\end{definition}
 
Variables that are non-requisite are immaterial.

\begin{restatable}[Materiality criterion]{theorem}{MaterialityCriterion}
  \label{th:materiality}
  A single decision CID $\causalgraph$ is compatible 
  with 
  $\incentivevar \in \evars$
  being material
    if and only if $\incentivevar$ is a requisite observation in $\causalgraph$.
\end{restatable}

The proof is supplied in appendix ~\ref{app:ai-acp:appendix-ri}.
The soundness direction (i.e.\ the \emph{only if} direction) is well-known, 
and follows from d-separation \citep{fagiuoli1998note,lauritzen2001representing,shachter2016decisions}.
In contrast, the completeness direction does not follow from the
completeness property of d-separation.
The d-connectedness of $\incentivevars$ to $\utilvars$ implies that $\utilvars$ may be conditionally dependent on $\incentivevars$.
It does not imply, however, that the expectation of $\utilvars$ or the utility attainable under an optimal policy will change.
Instead, our proof constructs a SCIM where some $\incentivevar \in \incentivevars$ is material.
This differs 
from a previous attempt by \citet{nielsen1999welldefined}
that is reviewed in \cref{sec:ai-acp:related-work}.

Let us now apply the graphical criterion to the grade prediction example in \cref{fig:race-a}.
Here, \emph{gender} is a non-requisite observation.
This means that gender is conditionally independent of grade 
given the high school and predicted grade.
So it can provide no useful information for predicting the university 
grade, given what else the predictor knows.
On the other hand, high school is a requisite observation,
so it may be required to make an optimal prediction.

Materiality asks whether a variable that is observed is necessary for optimal performance.
We can generalize this to unobserved variables, by also asking whether performance 
would be improved by observing an additional variable.
This concept, \emph{value of information}, is treated in \cref{app:ai-acp:voi}.

 \section{Response incentives}\label{sec:ai-acp:response}
%There are two ways to understand the proposition that an observation is material. One is that the observation contains useful information. From this perspective, a natural generalization is VoI, as described in the previous section. 
%An alternative perspective is that a material observation is one that influences optimal decisions. 
One way to understand materiality is that a material observation 
is one that influences optimal decisions.
So, a natural generalization is the set of all (observed and latent) variables that influence the decision. We say that these variables have a response incentive.\footnote{The term \emph{responsiveness} \citep{heckerman1995decision,shachter2016decisions}
  has a related but not identical meaning --
it refers to whether a decision $D$ affects a variable $\incentivevar$
rather than whether $\incentivevar$ affects $D$.}

\begin{definition}[Response incentive]\label{def:ai-acp:response-incentive}
Let $\scim$ be a single-decision SCIM.
A policy $\pi$ \emph{responds} to variables $\incentivevars\subseteq \structvars$ if
there exists some set $g^\incentivevars$ of soft interventions, 
one $g^\incentivevar$ for each $\incentivevar \in \incentivevars$,
and some setting $\exovars = \exovals$, such that
$\decisionvar_{g^\incentivevars}(\exovals) \ne \decisionvar(\exovals)$.
The variables
$\incentivevars$ have a \emph{response incentive} if all optimal
policies respond to $\incentivevars$.

%A CID \emph{admits} a response incentive on $\incentivevars$
%if it is compatible with a SCIM that has a response incentive on
%$\incentivevars$.
\end{definition}

For a response incentive on $\incentivevars$ to be possible, there must be:
i) a directed path $\incentivevar \pathto D$ for some $\incentivevar \in \incentivevars$, and 
ii) an incentive for $D$ to use information from that path.
For example, in \cref{fig:race-a}, \emph{gender} has a directed path to the decision
but it does not provide any information about the likely grade, so there is no response incentive.
The graphical criterion for RI builds on a modified graph with non-requisite
information links removed.

\begin{definition}[Minimal reduction; \citealp{lauritzen2001representing}] \label{def:irici:minimal-reduction}
  \label{def:ai-acp:reduced-graph}
  The \emph{minimal reduction} $\reducedgraph$ of a single-decision CID $\causalgraph$
  is the result of removing from $\causalgraph$ all information links from non-requisite observations.
\end{definition}

The presence (or absence) of a path $\incentivevar \pathto D$ in the minimal reduction tells us whether
a response incentive can occur.~\looseness=-1

\begin{restatable}[Response incentive criterion]{theorem}{ResponseIncentiveCriterion}\label{theorem:ri-graph-criterion}
A single‑decision CID \(\cid\) admits a response incentive on \(\incentivevars \subseteq \structvars\) if and only if the minimal reduction \(\reducedgraph\) has a directed path \(\incentivevar \pathto \decisionvar\) for some \(\incentivevar \in \incentivevars\).
\end{restatable}
% \enlargethispage{\baselineskip}

The intuition behind the proof is that an optimal decision only responds to
effects that propagate to one of its requisite observations.
For the completeness direction, we show in \cref{app:ai-acp:appendix-ri}
that if $\incentivevar \pathto D$ is present in the minimal reduction $\reducedgraph$, then
we can select a SCIM $\scim$ compatible with $\cid$ such that $D$ receives useful
information along that path, that any optimal policy must respond to.

In a setting where an agent has an option to shut down,
safe behaviour requires a condition called 
\emph{obedience}, which requires the system to respond to any 
shutdown instruction that is given \citep{carey2023human}.
For algorithms designed for human assistance, incentivising 
responsiveness in this way has been an important desideratum \citep{Hadfield-Menell2016osg}.

In a fairness setting, on the other hand, a response incentive may be a cause
for concern, as illustrated next.~\looseness=-1

\begin{figure}
\begin{subfigure}[t]{\linewidth}
    \centering
      \begin{influence-diagram}
    \setrectangularnodes
    \setcompactsize
    \tikzset{node distance=5.5mm and 5.5mm}

\node (R) [] {Race};
      \node (S) [right= of R] {High school};
      \node (E) [right= of S] {Education};
      \node (Gr) [right=of E] {Grade};
\node (D) [below=of S,decision] {Predicted grade};
      \node (Ge) [left= of D] {Gender};
      
      \node (U) at (Gr|-D) [utility] {Accuracy};

    \draw[->]
      (R) edge (S)
      (S) edge (E)
      (S) edge (D)
      (E) edge (Gr)
      (Gr) edge (U)
      (Ge) edge (D)
      (D) edge (U)
    ;

\voiincentive{S}
\riincentive{R}
    \riincentive[inner sep=2.2mm]{S}

\end{influence-diagram}

 \caption{
        Admits response incentive on race
    }\label{fig:race-a}
\end{subfigure}
  \begin{subfigure}[t]{\linewidth}
    \centering
    %\hspace*{0.5cm}
      \begin{influence-diagram}
    \setrectangularnodes
    \setcompactsize
    \tikzset{node distance=3.5mm and 4.5mm}

    \node (R) [] {Race};
      \node (S) [right= of R] {High\\ school};
      \node (E) [right= of S] {Education};
      \node (Gr) [right=of E] {Grade};
\node (D) [below=of S, xshift=4mm, decision] {Predicted grade};
      \node (Ge) [left= of D] {Gender};
      
      \node (U) at (Gr|-D) [utility] {Accuracy};

    \draw[->]
      (R) edge (S)
      (S) edge (E)
(E) edge (Gr)
      (Gr) edge (U)
      (Ge) edge (D)
      (D) edge (U)
;
%\voiincentive{R}
%\voiincentive{S}
  \end{influence-diagram}
    \hspace*{0.5cm}
  \begin{influence-diagram}
    \cidlegend[right = 5mm of Gr, yshift=-2mm]{
      \legendrow{value of information}{Material} \\
      \legendrow{response incentive}{Response\\ incentive} \\
}
  \end{influence-diagram}
\hspace*{-2.5cm}
     \caption{
      Admits no response incentive on race
    }\label{fig:race-b}
  \end{subfigure}
  \caption[Incentives for the grade-predictor]{In (a), the admissible incentives of the grade prediction example from \cref{fig:race-preview} are shown,
      including a response incentive on race.
      In (b), the predictor no longer has access to the students' high school,
      and hence there can no longer be any response incentive on race.~\looseness=-1
  }\label{fig:race}
\end{figure}
 \paragraph{Incentivised unfairness}
Response incentives are closely related to counterfactual fairness \citep{kusner2017counterfactual,kilbertus2017avoiding}.
A prediction --- or more generally a decision --- is considered counterfactually
unfair if a change to a \emph{sensitive attribute} like race or gender would
change the decision.

\begin{definition}[Counterfactual fairness; {\citealp{kusner2017counterfactual}}]
A policy $\pi$ is \emph{counterfactually fair} with respect to a
  sensitive attribute $A$ if
\begin{equation*}
    \label{eq:cf}
    \Prs{\pi}\left(
      \decisionvar_{a'} = \decisionval\mid \pad, a \right) =
    \Prs{\pi}\left(
      \decisionvar = \decisionval \mid \pad, a\right)
  \end{equation*}
  for every decision $\decisionval \in \dom(\decisionvar)$,
  every context $\pad \in \dom(\Pad)$, and
  every pair of attributes $a, a' \in \dom(A)$ with $\Pr(\pad, a) > 0$.
\end{definition}
 
A response incentive on a sensitive attribute indicates that counterfactual unfairness is incentivized,
as it implies that \emph{all} optimal policies are counterfactually unfair:

\begin{restatable}[Counterfactual fairness and response incentives]{theorem}{theoremcffair}\label{theorem:counterfactual-fairness}
In a single-decision SCIM $\scim$ with a sensitive attribute $A\in\rivars$,
all optimal policies $\pi^*$ are counterfactually unfair
with respect to $A$ if and
only if $\{A\}$ has a response incentive.
\end{restatable}

The proof is given in \cref{app:ai-acp:appendix-fairness}.

A response incentive on a sensitive attribute means
that counterfactual unfairness is not just possible, but incentivized.
As a result, the graphical criterion for a response incentive 
is more restrictive than the graphical criterion for 
counterfactual unfairness being possible.
The latter requires only that a sensitive attribute be an ancestor of the decision \citep[Lemma 1]{kusner2017counterfactual}.
For example, in the grade prediction example of \cref{fig:race-a},
it is possible for a predictor to be counterfactually unfair
with respect to either \emph{gender} or \emph{race}, because both are ancestors of the decision.
The response incentive criterion can tell us whether counterfactual unfairness may actually be incentivized.
In this example, the minimal reduction includes the edge from
\emph{high school} to \emph{predicted grade} and hence the directed path from \emph{race} to \emph{predicted grade}.
However, it excludes the edge from \emph{gender} to \emph{predicted grade}.
This means that the agent is incentivized to be counterfactually unfair with respect to
\emph{race} but not to \emph{gender}.~\looseness=-1

Based on this, how should the system be redesigned? According to the response incentive criterion,
the most important change is to remove the path from
\emph{race} to \emph{predicted grade} in the minimal reduction. This
can be done by removing the agent's access to \emph{high school}.
This change is implemented in \cref{fig:race-b}, where there is no
response incentive on either
sensitive variable.~\looseness=-1

%Value of information is also related to fairness.
%For a sensitive variable that is not a parent of the decision, positive VoI means that \emph{if} the predictor gained access to its value,
%then the predictor would use it.
%For example, if in \cref{fig:race-b} an edge is added from \emph{race} to \emph{predicted grade}, then unfair behaviour will result.
%In practice, such access can result from unanticipated correlations between
%the sensitive attribute and parents of the decision, rather
%than the system being given direct access to the attribute.
%Analysing VoI may help detect such problems at an early stage.
%However, VoI is less closely related to counterfactual fairness than response incentives.
%In particular, \emph{race} lacks VoI in \cref{fig:race-a}, but counterfactual unfairness is incentivized.
%On the other hand, \cref{fig:race-b} admits positive VoI for \emph{race}, but counterfactual unfairness is not incentivized.

The incentive approach is not restricted to counterfactual fairness.
For any fairness definition, one could assess whether that kind of unfairness is incentivized
by checking whether it is present under all optimal policies.
For example, \citet{ashurst2022fair} considers whether unfairness is introduced
--- in that the prediction has greater demographic disparity than the true label ---
and establishes when this is incentivized.

\section{Instrumental control incentives}\label{sec:ai-acp:fci}
Let us return to the second running example, shown in \cref{fig:fci-preview}, 
where developers seek to anticipate harmful consequences of 
deploying a content recommender system.
A key concern they will have is that the system is incentivized 
to manipulate users' preferences.
In general, to describe whether an agent has to strategically 
influence some variable, we will define a notion of an 
\emph{instrumental control incentive}.
(This will also correspond to the notion of `convergent 
instrumental goals' described in the introduction.)
Note that this differs from the notion of value of control \citep{shachter1986evaluating}, which only considers the agent's need to 
influence a variable, and not its ability.
Value of control and its graphical criterion are analysed in \cref{app:ai-acp:voc}.

%We may be interested in whether 
%an agent has to strategically influence some variable, 
%in order to achieve optimal performance, corresponding to the 
%``convergent instrumental goals'' described in the introduction.

To formalize this question, we can consider whether an agent's  
influence on a variable $\incentivevar$ 
affects the policy's performance.
The effect of an alternative decision $d$ on the variable 
$\incentivevar$ can be written as $\incentivevar_d$.
And the effect of an alternative value $\incentiveval$ 
on the outcome $\utilvars$ can be written as 
$\utilvars_\incentiveval$.
Putting these together, the effect of setting $\incentivevar$ 
to the value obtained under $d$ is denoted by the nested 
counterfactual $\totutilvar_{\incentivevar_d}$, 
as defined in \cref{subsec:scm}.
If the performance of optimal policies is sensitive to such 
an intervention, then we will say there is an 
instrumental control incentive.

\begin{definition}[Instrumental control incentive]\label{def:ai-acp:instrumental-goal}
In a single-decision SCIM $\scim$,
there is an \emph{instrumental control incentive} on nodes $\incentivevars$ in decision context
$\pad$ if, for all optimal policies $\pi^*$,
there exists an alternative assignment $D=d$ such that:
\begin{equation}\label{eq:ci}
\EEs{\pi^*}{[\totutilvar_{\incentivevars_{\decisionval}} \mid \pad]}
\neq
\EEs{\pi^*}{[\totutilvar \mid \pad]}.
\end{equation}
\end{definition}

%Conceptually, an instrumental control incentive can be interpreted as
%follows.
%If the agent got to choose $D$ to influence $\incentivevar$ independently of how $D$
%influences other aspects of the environment, would that choice matter?
%We call this an \emph{instrumental} control incentive, as the control of $\incentivevar$
%is a tool for achieving utility.
ICIs only consider the influence of $\incentivevar$ 
that is instrumental to achieving utility --- 
in the terminology of \citet{pearl2001direct}, a
\emph{natural indirect effect} from
$D$ to $U$ via $\incentivevar$ in $\scim_{\pi^*}$, for all optimal policies $\pi^*$.
ICIs do not consider side-effects shared by optimal policies:
for instance, it may be that all optimal policies affect $\incentivevar$ in a particular
way, even if $\incentivevar$ is a not an ancestor of any utility node, and
in such cases, no ICI is present.

\begin{restatable}[Instrumental Control Incentive Criterion]{theorem}{InstrumentalControlIncentiveCriterion}\label{theorem:ici-graph-criterion}
A single-decision CID $\cid$ admits an instrumental control 
incentive on
$\incentivevars \subseteq \civars$ if and only if $\cid$ has a directed
path from the decision $\decisionvar$ to a utility node $\utilvar \in \utilvars$
that passes through some $\incentivevar \in \incentivevars$.
\end{restatable}

The logic behind the soundness proof is that if there is no path from $D$ to some $\incentivevar \in \incentivevars$ to
$\utilvars$, then $D$ cannot have any effect on $\utilvars$ via $\incentivevars$.
For the completeness direction,
we show how to construct a SCIM
so that $U_{\incentivevars_d}$ differs from the non-intervened $U$ for any diagram with a
path $D\pathto \incentivevar \pathto \utilvars$
for any $\incentivevar \in \incentivevars$.

Let us apply this criterion to the content recommendation example
in \cref{fig:fci-application1}.
The only nodes $\incentivevar \in \incentivevars$ in this graph that lie on a path $D \pathto \incentivevar \pathto \utilvar$
for any $\utilvar \in \utilvars$
are \emph{clicks} and \emph{influenced user opinions}.
Since \emph{influenced user opinions} has an instrumental control incentive,
the agent may seek to influence that variable in order to attain utility.
For example, it may be easier to predict what content a more emotional user will
click on and therefore, a recommender may
achieve a higher click rate by introducing posts that
induce strong emotions.~\looseness=-1

\newcommand{\casubfigwidth}{0.475\textwidth}

\begin{figure}
\begin{subfigure}[t]{0.475\textwidth}
    \centering
    \begin{influence-diagram}
  \setrectangularnodes
  \setcompactsize

  \node (D) [decision] {Posts\\ to show};
  %\node (M) [left = of D] {Model of \\ original opinions};
  \node (P1) [above left =6mm and -8mm of D] {Original\\ user opinions};
  \node (U) [right =5mm of D, utility] {Clicks};
  \node (P2) at (U|-P1) {Influenced\\ user opinions};

  \draw[->]
    %(P1) edge (M)
    (P1) edge (D)
    (P1) edge (P2)
    (D) edge (P2)
    (D) edge (U)
    (P2) edge (U)
  ;

  %\vocincentive{P1}
  %\vocincentive{M}
  %\vocincentive{P2}
  %\vocincentive[uchamf]{U}
  \iciincentive{D}
  %\iciincentive[inner sep=1.8mm]{P2}
  %\iciincentive[inner sep=1.8mm]{U}
  \iciincentive[uchamf]{P2}
  \iciincentive[uchamf]{U}

    \cidlegend[left = 5.5mm of D,yshift=-1mm]{
      %\node (1) [draw, circle, value of control] {}; 
      %\node (2) [right = 0.5mm of 1, draw=none, rectangle] {VoC};
      \node (3) [draw, circle, feasible control incentive] {};
      \node (4) [right = 0.5mm of 3, draw=none, rectangle] {ICI}; \\
}
  
\end{influence-diagram}
     \caption{
        Admits instrumental control incentive on user opinion
    }
    \label{fig:fci-application1}
  \end{subfigure}\hspace{3mm}
  \begin{subfigure}[t]{0.475\textwidth}
  \centering
  \begin{influence-diagram}
  \setrectangularnodes
  \setcompactsize

  \node (D) [decision] {Posts\\ to show};
  \node (Dp) [above left =21.5mm and -9mm of D,dashed] {Inert\\ posts};
  \node (P2p) [right=10.5mm of Dp,dashed] {Counterfactual\\ opinions};
  \node (P1) [above left =6mm and -13mm of D] {Original\\ user opinions};
  \node (U) [right =14mm of D, utility,dashed] {Hypothetical\\Clicks};
  \node (P2) [right=4mm of P1] {Influenced\\ user opinions};

  \draw[->]
    (P1) edge (D)
    (Dp) edge (P2p)
    (P1) edge (P2p)
    (D) edge (P2)
    (D) edge (U)
    (P1) edge (U)
  ;
  \path (P2p) edge[->, bend left=35] (U);

  %\vocincentive{P1}
  %\vocincentive{M}
  \iciincentive{D}
  \iciincentive[uchamf]{U}
  %\vocincentive[uchamf]{U}

\end{influence-diagram}
       \caption{
      Admits no instrumental control incentive on user opinion
  }
  \label{fig:fci-application2}
  \end{subfigure}
  \caption[Content recommendation incentives]{In (a), the content recommendation example from \cref{fig:fci-preview} is shown to
      admit an instrumental control incentive on user opinion.
      This is avoided in (b) with a change to the objective.
      }\label{fig:fci-application}
\end{figure}

How could we instead design the agent to maximize clicks without manipulating the user's opinions
(i.e.\ without an instrumental control incentive on \emph{influenced user opinions})?
As shown in \cref{fig:fci-application2}, we could redesign the system so that
instead of being rewarded for the true click rate, it is rewarded for the
clicks that the user would give if they viewed 
some inert content that would not change their preferences.
An agent trained to maximize this objective would view any modification of user opinions as
irrelevant for improving its performance; however, it would still have an
instrumental control incentive for \emph{hypothetical clicks}, so it would still deliver desired content.

%This would be possible if users must view many posts before  their views are altered, or if there are some controlled conditions  that prevent this value shift from occurring.

It is worth remarking on a possible challenge with identifiability, 
and how to address it.
Hypothetical clicks is a counterfactual variable, impossible to observe in reality (as in reality, users' behaviour is determined by their actual preferences).
More formally, it is defined using the third (i.e.\ counterfactual) rung of Pearl's causal hierarchy, 
and it relies on the ability to compute $\utilvars$ across different counterfactual worlds simultaneously, 
something that cannot be done by experiment without further assumptions \citep{avin2005identifiability}.
Fortunately, \citet{carroll2022estimating} demonstrate one set of natural assumptions under which the hypothetical clicks can be inferred from observed interactions with a user, essentially by inferring the (latent) user opinion variable from gradual shifts in user behaviour over longer sequences of interaction.

%However, at least in some cases, it may be possible to learn a model of  preference changes, as per a simple demonstration by \citep{carroll2022estimating}.

This example is an instance of a very wide class of safety worries, 
where some \emph{delicate variable} has an ICI \citep{farquhar2022path}.
\citet{omohundro2008basic} has hypothesised that an advanced AI system would have a \emph{convergent instrumental goal} to survive, or to obtain computing resources, 
which we may view as undesired ICIs.
\citet{armstrong2017oracles} has raised the concern that AI systems 
might seek to make self-fulfilling predictions, 
whereas we would not want them to manipulate the world.
Additionally, \citet{krueger2020hidden} have demonstrated that AI systems 
sometimes seek to induce shifts in the distribution of their 
testing data.
In each case, their proposed solution, as in our example,
is to impute a fixed value to the delicate variable.
Such a solution has been termed a \emph{path-specific objective}, 
because it requires the agent to optimise an objective, ignoring the effects of its 
decisions along some channels \citep{farquhar2022path}.
Intuitively, the agent is tasked with ``imagining that it cannot 
influence'' this delicate variable when choosing a decision.
For this to work, the variable must be robust to unintentional
influence, and when this will or will not be the case remains 
an open question for all of the examples discussed.

\section{Intent} \label{sec:intent}
Returning to the example from \cref{fig:fci-application}, 
we may want to ask a related question: 
assuming that the agent took a particular action which had a particular influence on the user, what was the reason that the agent took the action? 
Did it intend to influence the user in this way?
This is relevant for assigning blame and moral responsibility, among other things \citep{halpern2018towards}.

\citet{halpern2018towards} and \citet{ward2024reasons} operationalise `intent' by asking whether the agent would pick a different policy if it `knew' that the effect on some variables $\incentivevars$ (e.g.\ user opinions) was guaranteed.
%is the \emph{reason} that the optimal policy was chosen 
%\citep{kleiman2015inference,halpern2018towards,ward2024reasons}.
Specifically, does there exist any suboptimal 
policy $\spi'$ that would surpass the performance of the agent's actual policy $\spi^*$ if the outcome of $\incentivevars$ was independent of its actions and fixed to ${\incentivevars_{\spi^*}}$?
This is necessary for the agent's
influence on $\incentivevars$ to be the actual cause of 
a policy's optimality \citep{ward2024reasons}.\footnote{
See \citep[Theorem 6]{ward2024reasons}, which shows that intent to cause an outcome 
is equivalent to the decision being an actual cause of the outcome.
}
If
$\incentivevars$ is a minimal set that satisfies 
this requirement, 
then the influence on that variable is said to be intentional.

There also exists an inverse question that has not 
been studied so far:
would the optimal policy
perform as badly as a suboptimal policy $\spi'$
if it only lost its control of $\incentivevars$ 
(i.e.\ if $\incentivevars$ were fixed to $\incentivevars_{\spi'})$?
Whereas the past definitions of intent pertain to ``adding'' control, 
this new question pertains to ``subtracting'' control, 
and allows us to define a new notion of intent.
The two ideas are unified in the definition below.

\begin{restatable}[Intent]{definition}{intent} \label{def:intent}
Let $\scim$ be a single-decision SCIM that represents an agent's
beliefs.
There is \emph{additive intent} 
to influence nodes $\sW$
by choosing $\spi^*$ over $\spi'$
if $\EE_{\spi'}[\totutilvar] < \EE_{\spi^*}[\totutilvar]$,
and $\sW$ is a subset $\sW \subseteq \sZ$ of variables $\sZ$, that is subset-minimal such that:
%for every $\spi^* \in \sPi^*,\spi \in \sPi$, we have:
\begin{equation} \label{eq:reason-to-move}
\EE_{\spi'}[\totutilvar_{\sZ_{\spi^*}}] \geq \EE_{\spi^*}[\totutilvar]. 
\end{equation}
There is \emph{subtractive intent} if $\EE_{\spi'}[\totutilvar] < \EE_{\spi^*}[\totutilvar]$
and $\sZ$ is subset-minimal such that:
%for every $\spi^* \in \sPi^*,\spi \in \sPi$,
\begin{equation} \label{eq:reason-not-to-move}
\EE_{\spi^*}[\totutilvar_{\sZ_{\spi'}}] \leq \EE_{\spi'}[\totutilvar].
\end{equation}
For a set $\sPi'$, we say that there is an (additive/subtractive) intent 
to influence $\sW$ by choosing $\spi^*$ over $\sPi'$ if this intent is present over every $\spi'$ in $\sPi'$.
\end{restatable}

The notion of intent 
previously proposed in \citet{halpern2018towards} and \citet{ward2024reasons}
is equivalent to additive intent (\cref{app:intent}).
There is one difference in presentation:
since intent is about a policy newly reaching the level of another 
policy, this requires that their performances differ in the first 
place, so we have made explicit the
$\EE_{\spi'}[\totutilvar]<\EE_{\spi^*}[\totutilvar]$ condition 
that was implicit in the original definition.

%Intuitively, the expression \cref{eq:reason-to-move} can be understood to mean that enough is gained 
%by influencing $\sW$ with the policy $\spi^*$ under a baseline of the policy $\spi$, 
%to outperform the policy $\spi^*$.
%Conversely, \cref{eq:reason-not-to-move} means that so much is lost by influencing $\sW$ with $\spi$
%rather than the baseline $\spi^*$, that the performance becomes worse than $\spi$.

Of these two notions, it is subtractive intent that comes closer to ICI, 
because it starts with the optimal policy $\spi^*$, as does intent,
and considers an intervention to $\incentivevars$ using an alternative 
policy $\spi'$.
Algebraically, the only difference 
is that the ICI indicates that this perturbation decreases performance a nonzero amount, 
while subtractive intent requires the perturbation to worsen 
performance beyond the threshold $\EE_{\spi'}[\totutilvar]$.
(Whereas additive intent starts from a suboptimal policy $\spi$, and is algebraically 
less similar.)
Both kinds of intent differ from ICI in that they evaluate an SCIM $\scim$, 
that corresponds to the agent's beliefs, rather than reality.
Despite these differences, both kinds of intent have the same 
graphical criterion as an ICI.
We can therefore generalize the graphical criterion from 
\citet{ward2024reasons} to accommodate both additive 
and subtractive intent.

%A CID $\cid$ \emph{admits} an instrumental control incentive on
%$\incentivevar$ if $\cid$ is compatible with a SCIM $\scim$ with
%an instrumental control incentive on $X$ for some decision context $\pad$.
%The following theorem gives a sound and complete graphical criterion for
%which CIDs admit instrumental control incentives.

\begin{restatable}[Intent Criterion]{theorem}{IntentCriterion}\label{theorem:intent-graph-criterion}
A single-decision CID $\cid$ admits (additive/ subtractive) intent on
$\incentivevars \subseteq \civars$ if and only if $\cid$ has a directed
path $D \pathto \incentivevar \pathto \utilvar$
for some $\incentivevar \in \incentivevars$
and $\utilvar \in \utilvars$.
\end{restatable}

Similarly to the ICI criterion, 
the intent criterion allows 
the agent to intend to influence \emph{clicks} 
and \emph{influenced user opinions}, 
whereas if the path-specific effect objective is used, 
then the agent can no longer intend to influence 
the user's preferences.

We can also make the relationship between intent and ICI more precise:
ICI is related to the presence of subtractive intent given optimal policies, 
although it is a slightly weaker condition, because it does not place any requirements 
on whether the alternative policy $\pi'$ must have a positive or negative influence on $U$ through $\sW$.

\begin{proposition}[Subtractive intent and ICI]
    In a single-decision SCIM $\calM$,
    if for all optimal $\pi^*$, there is subtractive intent to influence $\sW$ by choosing $\pi^*$ 
    over $\pi'$, then
    there is an ICI on $\sW$.
\end{proposition}

The proof is as follows.

\begin{proof}
We prove the result by contrapositive:
that if there is no ICI, 
then no optimal policy $\spi^*$ cannot satisfy both of the conditions for subtractive intent.

Let $\spi^*$ be an arbitrary optimal policy.
By the definition of ICI, we have that for all $\pa^D$,
$\EE_{pi^*}[\totutilvar_{\sW_d} \mid \pa^D] = \EE_{pi^*}[\totutilvar \mid \pa^D]$.
It follows that 
$\EE_{pi^*}[\totutilvar_{\sW_{\pi'}}] = \EE_{pi^*}[\totutilvar]$.
Recall that the conditions for subtractive intent are that:
$\EE^{\pi'}[\totutilvar]<\EE^{\pi^*}[\totutilvar]$
and $\EE^{\pi^*}[\totutilvar_{\sZ_{\pi'}}] \leq \EE^{\pi'}[\totutilvar]$.
But if both of these conditions were satisfieed, we would have
$$\EE^{\pi'}[\totutilvar]<\EE^{\pi^*}[\totutilvar] = \EE_{pi^*}[\totutilvar_{\sW_{\pi'}}] \leq \EE^{\pi'}[\totutilvar]$$
which is a contradiction, so there cannot be subtractive intent, proving the result.
\end{proof}

\section{Impact incentives} \label{sec:ai-acp:ii}
Even if an algorithm does not intentionally manipulate a 
sensitive variable, it may harmfully influence it 
unintentionally (i.e. as a side-effect).
For instance, even when a recommender system does not 
intend to manipulate human preferences, it may still do so
 \citep{jiang2019degenerate}.
This could be true if the persuasive videos are 
ones that the user prefers to click on even 
before any preference change has occurred.
%Then, there is no instrumental control incentive on the user's opinions, but the harmful influence still occurs.

To describe this kind of problem, 
we need a concept that checks whether the agent is impacting 
a variable relative to some baseline.
Formally, we can look at the assignments that this variable 
takes under the optimal policies, and evaluate their distance 
from the values that it assumes under some baseline policy, 
given a suitable distance metric.

\begin{definition}[Impact Incentive (II)]
Let $\incentivevars \subseteq \evars $ be nodes in a
  single-decision SCIM $\scim$.
  %and $g(\seps)$
  %a map from the exogenous variable domain $\supp(\sEps)$
  %to the domain of $\incentivevars$.
There is an incentive to impact $\incentivevars$ 
with 
distance function
%divergence\footnote{
%i.e.\ a map from distributions $p,q$ such that
%$\delta(p,q)\geq 0$ for all $p,q$,
%and $\delta(p,q)=0$ if and only if $p=q$.
%} 
$\delta$ and threshold $c>0$, 
relative to baseline policy $\spi'$,
if
every optimal policy $\spi$ has
$\EE[\delta(\incentivevar_\spi(\seps),\incentivevar_{\spi'}(\seps)]>c$ for some assignment $\seps$.
%$P_\spi(\incentivevars=\incentivevals) \neq P_{\spi'}(\incentivevars=\incentivevals)$, i.e.\ 
%for every $\seps$ with %$P(\seps)>0$, 
%  $\incentivevars_\spi(\seps) = g(\seps)$, 
%  but there exists a suboptimal policy $\spi'$ and assignment $\seps'$ with $P(\seps)>0$
%  such that $\incentivevars_{\spi'}(\seps') \neq g(\seps')$. 
  
  A CID $\causalgraph$ \emph{admits an impact incentive} if 
  there exists a model $\scim$, a distance function $\delta$, a $c\geq 0$ and a policy $\spi'$ such that there is an impact incentive.
  %for there is any $\spi'$ and $\seps$, 
  %such that every optimal $\spi$ has $\incentivevar_\spi(\seps) \neq \incentivevar_{\spi'}(\seps)$.
\end{definition}

One way to think about this is that instead of asking whether the 
agent's influence on $\incentivevar$ is the reason that optimality 
is achieved (intent),
we are asking: does the constraint of optimality cause $\incentivevar$ to have a different distribution?

The graphical criterion is as follows.
\begin{restatable}[Impact Incentive Criterion]{theorem}{ImpactIncentiveCriterion}\label{theorem:ii-graph-criterion}
A single-decision CID $\cid$ admits an impact incentive on
$\incentivevars \subseteq \structvars$ if and only if some $\incentivevar \in \incentivevars$ and utility $U \in \utilvars$ are both descendants in 
$\cid$ of $D$.
\end{restatable}

In past work, it has been proposed to add a penalty term to
the objective of an AI system to reduce the impact 
on some variable $\incentivevar$, called an
\emph{impact measure} \citep{armstrong2017low,krakovna2018penalizing}.
Such proposals can be understood as constraining the 
size of the impact incentive in the following sense.
Consider an objective like $U + \lambda \delta(w,w')$ 
that encourages 
the AI system to keep $W$ close to some baseline value $w'$, 
according to some distance function $\delta$.
This objective will produce the smallest possible impact incentive, in terms of $\delta$,
for a given level of expected $\EE[U]$.
Graphically, an impact measure can be illustrated as in figure \cref{fig:twin-impact-measure}.
In this twin graph, \emph{counterfactual opinions} represents 
the baseline state from which distance is measured.
Then, \emph{impact measure} is computed as a function of $\incentivevar_{\spi'}$ and $\incentivevar_{\spi}$.
Adding impact measure as a new child of 
\emph{influenced user opinions} makes the AI care about this 
delicate variable.
Interestingly, this means that if a variable is impacted by 
a policy and then an impact measure is applied, there 
will be an ICI on that delicate variable --- the agent will 
try to control it, to keep it close to its baseline value.

% \ifthesis
% \newcommand{\wrapfiglines}{8}
% \else
% \newcommand{\wrapfiglines}{11}
% \fi
% \begin{wrapfigure}[\wrapfiglines]{r}{0.60\linewidth}
\begin{figure}
      \centering
      \begin{influence-diagram}
  \setrectangularnodes
  \setcompactsize
  \tikzset{node distance=4mm and 4mm}
  \node (D) [decision, anchor=west] {Posts\\ to show};
  \node (Dp) [above left =16mm and -7mm of D,dashed] {Inert\\ posts};
  \node (P2p) [right=3.5mm of Dp,dashed] {Counterfactual\\ opinions};
  \node (R) [utility,below right=0.5mm and 5mm of P2p] {Impact \\ measure};
  %\node (M) [left = 7mm  of D] {Model of\\ original opinions};
  \node (P1) [above left =4mm and -3mm of D] {Original\\ user opinions};
  \node (U) [right = of D, utility] {Clicks};
  \node (P2) at (U|-P1) {Influenced\\ user opinions};
\draw[->]
    %(P1) edge (M)
    (P1) edge (D)
    (P1) edge (P2)
    (P1) edge (P2p)
    (D) edge (P2)
    (D) edge (U)
    (P2) edge (U)

    %(P1) edge (Dp)
    (Dp) edge (P2p)
    (P2p) edge (R)
    (P2) edge (R)
  ;
\end{influence-diagram}
\caption[An impact measure, illustrated in a twin graph]{A twin graph depicting an impact measure.} \label{fig:twin-impact-measure}
\end{figure}

Similar identifiability issues arise as in the case of path-specific objectives discussed in \cref{sec:ai-acp:fci}: 
we are required to know the user's preferences in some counterfactual world.
In the case of impact measures, it is possible to avoid this problem by considering the KL divergence between $P_{\spi}(\incentivevars)$ and $P_{\spi'}(\incentivevars)$, rather than the distance between $\incentivevars_\spi(\seps)$ and $\incentivevars_{\spi'}(\seps)$.
The interventional distributions $P_{\spi}(\incentivevars)$ can be measured by experiment, which thereby avoids the counterfactual identifiability problem.

%using the second (i.e.\ interventional)
%rung of the causal hierarchy. 
%For example, one can consider the KL divergence between $P_{\spi}$ and $P_{\spi'}$, or use any other divergence function --- 
%quantities that can be measured by experiment.

We will now compare and contrast the use cases of path-specific objectives 
versus impact measures.
If one is concerned with an agent intentionally manipulating a variable $\incentivevar$, 
then the agent's intent is the problem.
For example, we may worry about a content recommender intentionally altering users 
preferences. 
In this case, the intent (and ICI) may be removed with a path-specific objective 
\citep{farquhar2022path}, as shown in \cref{fig:fci-application}.
This will allow the variable $\incentivevar$ to drift from its original value, 
as a side-effect of AI action, or for other reasons altogether.
For example, users may still discover new interests that change their preferences, 
and we may regard this as desirable, so long as it is not a result of manipulation by 
the AI.
In other cases, we may have in mind a clear specification for how $\incentivevar$
should behave, and want to prevent any drift, intentional or otherwise,
from this baseline value.
For example, we may worry that users are led to political extremism, not because of the 
content recommender, but rather because of politically-motivated content creators, 
and we want our content recommender to actively defend against this by 
suppressing such content.
In this case, an impact measure \citep{krakovna2018penalizing} is more appropriate, 
and will limit impact incentive on users' preferences.%
\footnote{One other possible remedy would be ``quantilisation'' \citep{taylor2016quantilizers}, which 
seeks a policy with that is similar a trusted baseline, in terms of 
a guaranteed upper bound on the Kullback-Leibler divergence.
We may wish to say that quantilisers upper-bound the impact incentives, 
on the variable $W=D$,
where $\delta$ is the Kullback-Leibler divergence.
However, Kullback-Leibler divergence is a function of the distribution, 
$P_\spi(\seps)$ rather than particular assignments $W_\spi(\seps),W_{\spi'}(\seps)$.
Perhaps this connection could be spelled out by defining impact incentives in a 
causal influence diagram (i.e.\ rung-2) setting, but this matter is left to future 
work.
} %
%\tom{``Cooking show example?''}
%\ryan{Move the Q-learner example to appendix?}
It is important to note that the presence of instrumental control assumptions 
can be sensitive to the modelling assumptions used to analyse an agent.
For example, consider an RL agent that uses Q-learning to solve an environment with two timesteps.\footnote{Thanks to Paul Christiano for this example.}
It is natural to model this Q-learner as a single agent as in \cref{fig:q-learner1}, 
where $D$ is chosen to optimise the reward $R$.
Then, the future state $s'$ satisfies the instrumental control incentive criterion.
This matches our intuition --- that RL systems may benefit from shaping their future environment.
Suppose instead that we regard as an agent
the function inside the Q-learner that chooses $d$ to maximize the \emph{Q-function} $q(s,d):=\EE[R \mid s,d]$.
In this model, shown in \cref{fig:q-learner2}.%
\footnote{It would also be possible to consider a multi-agent influence diagram \citep{hammondreasoning} where the $Q$ function is included as a decision, and 
its goal is a loss function $\ell = \lvert r-\hat{r} \rvert$, but we note that the set of variables 
that satisfy the graphical criterion for an ICI would not be altered by including this $Q$ variable, along 
with a utility variable $\ell$ that is a child of $Q$ and $R$.}
Then, the decision's effect on $s'$ is a mere side-effect to the task of maximizing $q(s,d)$.
Although the instrumental control incentive is absent, 
the physical reality of this second scenario is identical to the first, and so
there any harmful influence on $s'$ may still be finely tuned to the agent's objective.

Ideally, we would reduce this sensitivity to modelling assumptions, 
and we might hope to achieve this by using more fundamental modelling assumptions, 
such as the independent causal mechanism assumption, to ascertain
which variables should be viewed as decisions \citet[Sec.~4.3]{kenton2023discovering}. 
But such approaches still are sensitive to which variables are regarded as 
causal mechanisms or physical variables, and further research is needed 
to understand this dependence.

%Moreover, what appears in one model to be a mere side-effect, may be exquisitely tuned to achieve reward, under different modelling assumptions.
%\citet[Sec.~4.3]{kenton2023discovering} argue that the graphical ICI criterion can be trusted to rule out intentional influence under an independent causal mechanism assumption \citep{scholkopf2021}, but further research is needed to fully verify this claim.

%Whether there exists some automated methods for characterising and mitigating these finely-tuned ``side-effects'' is an active research area \citep{kenton2023discovering}.

\begin{figure}
\begin{subfigure}[t]{0.37\linewidth}
    \centering
  \begin{influence-diagram}[node distance=5mm]\setcompactsize
    \node (S) {$S$};
    \node (Sprime) [right=16mm of S] {$S'$};
    \node (D) [below right= 7.5mm and 7mm of S] [decision] {$D$};
    \node (Q) [below=19mm of S,phantom] {\phantom{$Q$}};
    \node (R) [utility,right=16mm of Q] {$R$};

    \draw[->] (S) -- (Sprime);
    \draw[->] (Sprime) -- (R);
    \draw[->] (S) -- (D);
    \draw[->] (D) -- (Sprime);
    \draw[->] (D) -- (R);

    \iiincentive[inner sep=-0.3mm]{Sprime}
    \iciincentive[inner sep=.4mm]{Sprime}
    \iiincentive[uchamf=2.5mm, inner sep=-1.3mm]{R}
    \iciincentive[uchamf=2.5mm, inner sep=-0.6mm]{R}
    \end{influence-diagram}
        \caption{A one-step decision problem}
    \label{fig:q-learner1}
  \end{subfigure}
  \vspace{2mm}
  \begin{subfigure}[t]{0.46\linewidth}
  \centering
    \begin{influence-diagram}[node distance=7mm]\setcompactsize
    \node (S) {$S$};
    \node (Sprime) [right=20mm of S] {$S'$};
    \node (D) [below right= 3mm and 8.5mm of S] [decision] {$D$};
    \node (Rhat) [utility,below=8mm of D,label={[yshift=-2mm,shape=rectangle]below:$f_{\hat{R}}(s,d)\!=\!\EE[R \mid s,d]$}] {\hspace{-0.5mm}\vspace{-0.3mm}$\hat{R}$};
    \node (R) [right=6mm of Rhat] {$R$};
    %\node (L) [below=8mm of R,label={[xshift=2mm,yshift=0mm]right:$l=\lvert r - \hat{r}\rvert$}] {$L$};

    \draw[->] (S) -- (Sprime);
    \draw[->] (S) -- (D);
    \draw[->] (Sprime) -- (R);
    \draw[->] (D) -- (Sprime);
    \draw[->] (D) -- (R);
    \draw[->] (D) -- (Rhat);
    \draw[->] (S) -- (Rhat);
    %\draw[->] (Q) -- (Rhat);
    %\draw[->] (Q) -- (L);
    %\draw[->] (R) -- (L);

    \iiincentive[uchamf=2.5mm, inner sep=-1.3mm]{Rhat}
    \iciincentive[uchamf=2.5mm, inner sep=-0.6mm]{Rhat}
    \iiincentive[inner sep=-0.3mm]{Sprime}
    \iiincentive[inner sep=-0.3mm]{R}
    %\iiincentive[inner sep=-0.3mm]{L}
    \end{influence-diagram}
        \caption{The task of maximizing a Q-function}
  \label{fig:q-learner2}
  \end{subfigure}
      \begin{subfigure}[t]{0.15\textwidth}
    \begin{influence-diagram}
    \cidlegend{
      \legendrow{impact incentive}{II} \\
      \legendrow{feasible control incentive}{ICI} \\
}\end{influence-diagram}\end{subfigure}
  \caption[Incentives of a Q-learner]{Two possible representations of a Q-learner solving a one-step decision problem.}\label{fig:q-learner}
\end{figure}

\section{Incentives in a multi-decision setting} \label{sec:ai-acp:multi-decision}
There are multiple possible ways that incentive concepts like RI, ICI and II may be generalized 
to multi-decision settings.
This is because the presence of an incentive at some decision $D$
may depend on the policy followed at other decisions.
If we want to know the incentives when a model is fully trained, we could 
see whether some incentive concept $\phi$ holds
for some or all of the optimal policies.
Alternatively, we may be interested in sub-optimal policies as well.
Both cases are included in the following definition.
Note that in the case of a multi-decision CID, we will denote
the decision rule for a particular decision $D^i$ as $\pi^i$, 
and the set of decision rules for all other policies as $\spi^{-i}$.%
\footnote{Those familiar with temporal logic in games may notice that this is analogous to the 
notion of E-NASH and A-NASH propositions --- 
    ones that hold in one or all Nash Equilibria, respectively \citep{chatterjee2010strategy,wooldridge2016rational}.}

% \ifthesis
% \newcommand{\combolines}{7}
% \else
% \newcommand{\combolines}{10}
% \fi
%\begin{wrapfigure}[\combolines]{r}{0.34\linewidth}\vspace{-5mm}\centering
\begin{figure}
\centering
  \begin{influence-diagram}\setcompactsize
    \node (D1) [decision] {$D$};
    \node (S1) [right=10mm of D1]{$S$};
    \node (D2) [decision, below=10mm of D1] {$D'$};
    \node (S2) [right=10mm of D2] {$S'$};
    \node (C) [right=10mm of S1] {$C$};
    \node (U) [utility,right=8.3mm of S2] {$U$};

    \draw[->] (D1) -- (S1);
    \draw[->] (D1) -- (D2);
    \draw[->] (D2) -- (S2);
    \draw[->] (S1) -- (U);
    \draw[->] (S2) -- (U);
    \draw[->] (C) -- (U);

    \end{influence-diagram}
        \caption{The task of opening a combination lock} \label{fig:ai-acp:combo-lock}
\end{figure}

\begin{definition}[Multi-decision $\phi$-incentive]
Let $\phi$ be a proposition defined on a single-decision SCIM, 
and let $\scim$ be a multi-decision SCIM.
    There is an A- (resp.\ E-) optimal $\phi$ at the decision 
    $\decisionvar^i$ if for all (resp. there exists some) $\spi \in \argmax_{\spi'} \mathbb{E}_{\spi'}[U]$, such that
    $\phi$ holds in $\scim_{\spi^{-i}}$,
    the single-decision SCIM obtained by substituting in the decision rules 
    $\spi^{-i}$ for decisions other than $\decisionvar^i$ into $\scim$.
    
    There is an A- (resp.\ E-) pre-optimal $\phi$ at $\decisionvar^i$ if
    for all (resp.\ there exists some) $\spi \in \sPi$, such that
    $\phi \text{ holds in } \scim_{\spi^{-i}}$,
    where $\sPi$ is the set of all policies.
\end{definition}
We focus exclusively on cases where $\phi$ is the presence of a RI, II, or ICI, in a single-decision SCIM.

For example, consider the task of opening a combination lock (\cref{fig:ai-acp:combo-lock}).
Assume that the correct combination is $c=(9,9)$.
There are two decisions, $d,d' \in \{0,\ldots,9\}$, 
which are stored in the states $s=d$ and $s'=d'$,
and that are checked against the combination to output utility of $1$ or $0$, 
i.e.\ $u = \delta(c[1]=s \land c[2]=s')$.
If $D$ is chosen optimally, i.e.\ $d=9$, then $S'$ has an instrumental control incentive for $D'$, 
because it must be set to $9$ in order to obtain $u=1$.
In other words, an A-optimal instrumental control incentive is present.
If instead $D$ is set to $8$, then $D'$ lacks any such incentive.
So there is no A-pre-optimal instrumental control incentive.

The fact that the instrumental control incentive is present for all optimal policies
implies that it is also present for one optimal policy, i.e.\ that an E-optimal 
incentive is present, and for one policy altogether, i.e.\ that an E-pre-optimal incentive 
is also present.
This is a general rule: the four types of multi-decision incentive 
always have this inclusion relation.
\begin{proposition} \label{prop:hierarchy}
    For any $\phi$, 
    A-pre-optimal incentive $\implies$
    A-optimal incentive $\implies$
    E-optimal incentive $\implies$
    E-pre-optimal incentive.
\end{proposition}
\begin{proof}
    These implications, from left to right, hold because:
    i) Optimal policies are a subset of all policies, ii) any optimal policy is in the set of optimal policies, and iii) any optimal policy is a policy.
\end{proof}

In establishing graphical criteria for these incentive concepts, 
we can draw on a helpful equivalence.
An E-pre-optimal incentive on $D^i$ is equivalent to 
compatibility with a single-decision incentive on $D^i$, 
treating other decisions as chance variables.
To see this, notice that in either case,
one can impute any function to variables other than $D^i$.
Since an E-pre-optimal incentive is the weakest of the four kinds of multi-decision incentive, the single-decision graphical criteria 
can be used to rule out \emph{any} form of multi-decision incentive.
\begin{proposition}
Let $\scim$ be a multi-decision SCIM, 
and obtain $\scim'$ by replacing all decisions except for $D^i$ with chance nodes.
If the graphical criterion for single-decision (RI/ICI/II) does not hold in $\scim'$,
then there is no A- or E-optimal or pre-optimal multi-decision (RI/ICI/II) in $\scim$. 
\end{proposition}
\begin{proof}
    Immediate from \cref{prop:hierarchy} and the fact that choosing a set of functions and distributions $\{f^j,P^j\}$ for $D^{-i}$ such that $\scim_{\{f^j,P^j\}_{j \neq i}}$ satisfies $\phi$
    is equivalent to choosing a set of deterministic decision rules and distributions $\{\pi^j,P^j\}$ for $D^{-i}$ such that $\scim_{\{\pi^j,P^j\}_{j \neq i}}$ satisfies $\phi$.
\end{proof}

%\ryan{Summarise multi-decision VoI results?}

\section{Related work} \label{sec:ai-acp:related-work}

\paragraph{Causal influence diagrams}
The use of structural functions in a causal influence diagram 
goes back to at least
the \emph{functional influence diagram} of \citet{dawid2002influence}.
The most similar alternative model is the Howard canonical form influence diagram \citep{howard1990influence,heckerman1995decision}.
However, this only permits counterfactual reasoning downstream of decisions, which is
inadequate for defining the response incentive.
Similarly, the causality property for influence diagrams introduced by
\citet{heckerman1994decision} and \citet{shachter2010pearl} only constrains the relationships to being partially causal, 
in that decisions are taken to be causally antecedent to their descendants
(though adding new decision node parents to all nodes makes the diagram fully causal). \Cref{app:causality-examples} shows by example why the stronger causality
property
is necessary for most of the newly proposed incentive concepts.
%Causal influence diagrams have previously featured in works such as
%\citet{jern2011capturing} and \citet{kleiman2015inference}.
%~\looseness=-1
Building on this paper, multi-agent SCIMs are formalized in \citet{hammondreasoning}, and
an open-source Python implementation of CIDs has been
developed %\footnote{https://github.com/causalincentives/pycid/} %TODO: fix URL \ryan{Reintroduce github link for camera-ready}
\citep{fox2021pycid}.

\paragraph{Materiality and value of information} 
The criterion for materiality, \Cref{th:soft-sa},
 builds on previous work.
The concept of value of information was first introduced by \citet{howard1966information}.
The materiality soundness proof follows previous proofs \citep{Shachter1998,lauritzen2001representing},
while the completeness proof is most similar to an attempted proof
by \citet{nielsen1999welldefined}. They propose the criterion $\incentivevar \not\perp \utilvars^D\mid \Pa_D$ for requisite
nodes, which differs from \cref{eq:voi-criterion} in the conditioned set.
Taken literally,\footnote{Def.~\ref{def:ai-acp:d-separation} defines d-separation for potentially
overlapping sets.}
their criterion is unsound for requisite nodes.
For example, in \cref{fig:race-a}, \emph{high school} is d-separated from
\emph{accuracy} given $\Pad$, so their criterion would
fail to detect that \emph{high school} is requisite and admits VoI.\footnote{Furthermore, to prove that nodes meeting the d-connectedness property are requisite,
\citeauthor{nielsen1999welldefined} claim that
``$X$ is [requisite] for $D$ if $\Pr(\dom(U)\mid D,\Pad)$ is a function of $X$ and $U$ is a utility function relevant for $D$''.
However, $U$ being a function of $X$ only proves that $U$ is conditionally dependent on $X$,
not that it changes the expected utility, or is requisite or material.
Additional argumentation is needed to show that conditioning on $X$ can actually change the expected utility;
our proof provides such an argument.
Since an earlier version of this paper was placed online \citep{everitt2019understanding},
this completeness result was independently discovered by
\citet[Thm. 2]{zhang2020causal} and \citet[Thm. 1]{lee2020characterizing}.
There has also been further work in generalising this result to the case of multi-decision influence diagrams, in \citet{van2022complete}, where a 
sound and complete criterion is known for a class of influence diagrams
said to satisfy ``solubility'', also known as ``sufficient recall''.
%Further work on generalising this result to influence diagrams that do not 
%satisfy sufficient recall can be seen in 
%\citep{careytowards}.
} %\todo{TODO: add back the careytowards cite when publicly available.}

To have positive VoC, it is known that a node must be an ancestor of a utility node \citep{shachter1986evaluating},
but the authors know of no more specific criterion.
The concept of a \emph{relevant} node introduced by
\citet{nielsen1999welldefined} also bears some resemblance to VoC.

The relation of the current technical results to prior work is summarised in \cref{table:prior-work}.

\begingroup
\newcommand*{\cellcite}[1]{\citet{#1}}
\rowcolors{2}{gray!20}{white}
\begin{table*}[!th]
    \centering

%\newcolumntype{L}[1]{>{\raggedright\let\newline\\\arraybackslash\hspace{0pt}}m{#1}}
%\newcolumntype{C}[1]{>{\centering\let\newline\\\arraybackslash\hspace{0pt}}m{#1}}
%\newcolumntype{R}[1]{>{\raggedleft\let\newline\\\arraybackslash\hspace{0pt}}m{#1}}

\ifthesis
\newcommand{\extrawidth}{4mm}
\else
\newcommand{\extrawidth}{0mm}
\fi
\begin{tabular}{p{1.58cm+\extrawidth} L{2.15cm+\extrawidth} L{2.35cm+\extrawidth} L{2.4cm+\extrawidth} L{2.2cm+\extrawidth}}
& Definition & Criterion & Soundness &  Completeness \\
\midrule
\raggedright Mater-iality &
\cellcite{howard1966information};
\cellcite{matheson1990using} &
\cellcite{fagiuoli1998note};
\cellcite{lauritzen2001representing};
\cellcite{shachter2016decisions} &
\cellcite{fagiuoli1998note};
\cellcite{lauritzen2001representing};
\cellcite{shachter2016decisions} &
First correct proof to our knowledge; see \cref{sec:ai-acp:related-work}
\\
%VoC &
%\cellcite{shachter1986evaluating};
%\cellcite{matheson1990using};
%\cellcite{shachter2010pearl} &
%Incomplete version by \citet{shachter1986evaluating}
%\linebreak (see \cref{sec:ai-acp:related-work}) &
%New; proved using do-calculus and VoI &
%New; proved constructively
%(cf. ``relevant utility nodes'' \citet{nielsen1999welldefined}) \\
RI & New & New & New; proved using do-calculus and d-sep & New; proved constructively \\
ICI & New & New & New; proved using do-calculus & New; proved constructively \\
\raggedright (Positive/ negative) intent & (\cellcite{halpern2018towards}/new) & (\cellcite{ward2024reasons}/ new) & (\cellcite{ward2024reasons}/ new) & (\cellcite{ward2024reasons}/ new) \\
II & New & New & New; proved using do-calculus & New; proved constructively \\
\end{tabular}
\caption{
  Comparison with previous work, in a single-decision setting.
  The concept of materiality is well-known. 
    For VoI, a new, corrected proof is provided.
    For VoC, the present work offers a new criterion, proving it sound and complete.
  For response incentive (RI) and instrumental control incentive (ICI), 
  the criterion and all proofs are new.
}\label{table:prior-work}
\end{table*}
\endgroup

\paragraph{Instrumental control incentives and intent} 
In a causal setting, \citet{kleiman2015inference} offered a notion of intention to influence a 
variable $O$. 
A different kind of approach was taken by \citet{halpern2018towards} and \citet{ward2024reasons}, which offered 
definitions of intent that are specific to outcomes $O=o$.
In particular, \citet{ward2024reasons} was the first to prove a graphical criterion for any version of intent.
We extend this work by defining a positive version of intent, rather than just considering negative intent, 
and by proving a graphical criterion for this new concept.

\ryan{Is there anything to say about IIs and impact measures? Quantilisers? Skalse's mild optimisation thing?}

\paragraph{AI fairness}
Another application of this work is to evaluate when an AI system is incentivized to behave unfairly,
on some definition of fairness.
Response incentives address this question for counterfactual fairness \citep{kusner2017counterfactual,kilbertus2017avoiding}.
An incentive criterion corresponding to path-specific effects \citep{zhang2016causal,nabi2018fair} has been established by 
\citet{ashurst2022fair}, for the single-decision setting.
\citet{nabi2019learning} have shown how a policy may be chosen subject to path-specific effect constraints.
However, they assume recall of all past events, whereas the response incentive criterion applies to any CID.~\looseness=-1

\paragraph{Rational Verification}
Verification is the task of checking that a program satisfies specified properties, 
which is relevant to the present study because we are proposing to use incentive concepts to check agent behaviour.
Typically, specifications are defined using temporal logic \citep{emerson1990temporal}; sometimes a probabilistic temporal logic is used \citep{kwiatkowska2022probabilistic}. 
Of particular relevance is ``rational verification'', which validates the behaviour of agents that are pursuing objectives \citep{abate2021rational,gutierrez2021rational,wooldridge2016rational}. 
Overall, our work aligns with rational verification in that it aims to verify agent behaviour. 
The difference is that we have explored what kinds of properties can be specified in the language of causality in particular (rather than, for example, a temporal logic).
Relatedly, rather than using a Kripke structure of partially observable Markov decision process to model an agent-environment interaction, we have used causal models.

\paragraph{Mechanism design}
The aim of mechanism design is to understand how objectives and environments can
be designed, in order to shape the behaviour of rational agents (e.g.\ \citealp[Part II]{nisan2007algorithmic}).
At this high level, mechanism design is closely related to the incentive
design results we have developed in this paper. In practice, however, the strands of research look rather different.
Whereas mechanism design is primarily concerned with defining objective functions and action spaces 
that ensure desirable Nash equilibria, 
our core interest is on defining specifications for safe and fair agent behaviour, 
and on the causal structures that ensure that these specifications are satisfied.
\enlargethispage{\baselineskip}

\section{Discussion and conclusion}\label{sec:ai-acp:conclusion}
We have defined 
three new concepts: response incentives, instrumental control incentives 
and impact incentives, 
and have spelled out the connection between ICIs 
and an existing concept, intent.
We have proved complete graphical criteria for all four concepts 
in a single-decision setting.
Moreover, we have introduced a notion of incentives for influence diagrams with multiple decisions, 
and proved that the criteria are also sound for those cases.
In all cases we have shown how these definitions have implications 
for other concepts of broader interest, such as instrumental goals, 
counterfactual fairness, and impact measures.
We have also shown via toy examples how different existing approaches 
might be appropriate to addressing different kinds of problems, 
and have outlined circumstances in which each kind of approach is favoured.
These incentive concepts have already seen applications 
to areas including 
value learning \citep{armstrong2020pitfalls}, 
interruptibility \citep{langlois2021rl}, conservatism \citep{cohen2020unambitious}, 
modelling agent frameworks \citep{everitt2019modeling} and reward tampering \citep{everitt2021reward}.

Let us now outline some limitations of this paper, and what they might mean 
for future work.
First, note that to apply these criteria, we 
require knowledge of the (causal) structure of the interaction between
agent and environment.
Sometimes, experts know these causal relationships even when they do not know 
the exact parametric relationships between variables --- 
an ideal use case for these criteria.
In the context of incentive design, such a scenario may often arise, since
these causal relationships often follow directly from the design choices for 
an agent and its objective.
Sometimes, however, we may have too little knowledge of the causal structure 
to be able to apply the criteria.
In other cases, we may have, in a sense, too much knowledge for the graphical criteria to be 
useful. 
With abundant experimental data, we might compute safety and fairness properties (such as 
counterfactual fairness) directly, removing any need for the incentive concepts and graphical 
criteria.
A fourth scenario is that the world is not even describable by a fixed graphical model, 
but rather it is better understood using a probability tree, or relatedly, as an
extensive form game.
These limitations suggest possible avenues for future work.
To enlarge the set of cases in which 
incentives can be evaluated, it may be possible to devise ways of
combining experimental data with a priori knowledge to arrive at an evaluation.
To deal with extensive form games, it may be possible to devise graphical criteria 
for probability tree and game trees.

Another limitation of graphical criteria is that they can only offer a definitive 
resolution in one direction.
Also, although they can rule out incentives definitively, they can only 
rule that the presence of an incentive is compatible with the graphical structure.
It is still yet to be established how often incentives are present when they 
are compatible with the graph.
This might be proved using measure theoretic arguments resembling the arguments 
that d-connection almost always implies conditional dependence \citep{meek1995strong}.
Relatedly, their output says nothing of the strength of incentive present, 
which can only be established using detailed knowledge of the strength of causal relationships 
present in the environment, rather than just their presence or absence.

Finally, it would be possible to improve the applicability of these graphical criteria 
by extending them to multi-agent settings.
So far, we have considered single-agent settings, 
where the world is divided into agent and environment.
If instead part of the environment was modelled as a rival agent, 
and we assume Nash Equilibrium policy profiles, then this would place additional 
constraints on how that part of the environment may behave.
So, in some cases where single-agent criteria cannot rule out an incentive, 
a multi-agent criterion should be able to rule out that incentive.
On the other hand, if it is known that another player will observe and respond strategically 
to one's policy, then this could mean that policies could influence one another via pathways 
that are not visible in the original causal graph, which could mean that multi-agent 
incentives might arise, when the criteria for a single-agent setting would have ruled them 
impossible.
Some groundwork has been laid by \citep{hammondreasoning}, which formalizes multi-agent influence 
diagrams, but a full analysis of the multi-agent setting is left to future work.
%\enlargethispage{\baselineskip} 

\clearpage
\bibliography{refs.bib}
\appendix 
\section{Causality Examples}
\label{app:causality-examples}

\begin{figure}[H]
  \centering
  \begin{subfigure}{0.48\linewidth}
    \begin{influence-diagram}
      \node (D) [decision] {$D$};
      \node (U) [right = of D] [utility] {$U$};
      \node (W) [above = of U] {$W$};

      \edge {D} {W};
      \edge {D, W} {U};

    \begin{scope}[
      node distance = 1mm,
      every node/.style = {rectangle, draw=none}]
      \small
        \node [above = of W] {$W=D$};
        \node [below = of U, xshift=3mm] {$U=W+D$};
        \node [below = of D, xshift=-3mm] {$D\in\{0,1\}$};
      \end{scope}
    \end{influence-diagram}
    \caption{A causal influence diagram reflecting the causal structure of the
      environment}
    \label{fig:causality-example-a}
  \end{subfigure}
  \hspace{1mm}
    \begin{subfigure}{0.48\linewidth}
    \begin{influence-diagram}
      \node (D) [decision] {$D$};
      \node (U) [right = of D] [utility] {$U$};
      \node (W) [above = of U] {$W$};

      \edge {D} {W};
      \edge {D} {U};
    \begin{scope}[
      node distance = 1mm,
      every node/.style = {rectangle, draw=none}]
      \small
        \node [above = of W] {$W=D$};
        \node [below = of U, xshift=3mm] {$U=2\cdot D$};
        \node [below = of D, xshift=-2mm] {$D\in\{0,1\}$};
      \end{scope}
    \end{influence-diagram}
    \caption{Influence diagram that is causal in the sense of
      \citet{heckerman1994decision,heckerman1995decision}}
    \label{fig:causality-example-b}
  \end{subfigure}
  \caption{Two different influence diagram representations of the same
    situation, with different VoC and ICI.}
  \label{fig:causality-example}
\end{figure}

\begin{figure}[H]
  \centering
  \begin{subfigure}{0.475\linewidth}
    \begin{influence-diagram}
      \small
      \node (D) [decision] {$D$};
      \node (W) [above = 5mm of D] {$W$};
      \node (Y) [left = 5mm of W] {$Y$};
      \node (U) [right = 5mm of D] [utility] {$U$};

      \edge {Y} {W};
      \edge {W} {D};
      \edge {D, W} {U};

      \begin{scope}[
          node distance = 1mm,
          every node/.style = {rectangle, draw=none}]
        \small
        \node [above = of Y, xshift=-3mm, yshift=-0.6mm] {$Y\!\sim\! \{0, 1\}$};
        \node [above = of W, xshift=7mm] {$W=Y$};
        \node [below = 1mm of U, xshift=3mm] {$U\!=\!W\!+\!D$};
        \node [below = 1mm of D, xshift=-4.5mm] {$D\in\{0,1\}$};
      \end{scope}
    \end{influence-diagram}
    \caption{A causal influence diagram reflecting the causal structure of the
    environment}\label{fig:causality-example-ri-a}
  \end{subfigure}
  \hspace{1mm}
  \begin{subfigure}{0.475\linewidth}
    \begin{influence-diagram}
      \small
      \node (D) [decision] {$D$};
      \node (W) [above = 5mm of D] {$W$};
      \node (Y) [left = 5mm of W] {$Y$};
      \node (U) [right = 5mm of D] [utility] {$U$};

      \edge {W} {Y};
      \edge {W} {D};
      \edge {D, W} {U};

      \begin{scope}[
          node distance = 1mm,
          every node/.style = {rectangle, draw=none}]
        \small
        \node [above = of W, xshift=7mm, yshift=-0.6mm] {$W\!\sim\! \{0, 1\}$};
        \node [above = of Y, xshift=-3mm] {$Y=W$};
        \node [below = of U, xshift=3mm] {$U\!=\!W\!+\!D$};
        \node [below = of D, xshift=-4.5mm] {$D\in\{0,1\}$};
      \end{scope}
    \end{influence-diagram}
    \caption{Influence diagram that is causal in the sense of
    \citet{heckerman1994decision,heckerman1995decision}}\label{fig:causality-example-ri-b}
  \end{subfigure}
  \caption{Two different influence diagram representations of the same
    situation, with different RI and VoC.
    In \cref{fig:causality-example-ri-a},
    $Y$ is sampled from some arbitrary distribution on $\{0, 1\}$, for example a
    Bernoulli distribution with $p=0.5$.
    In \cref{fig:causality-example-ri-b}, $W$ is sampled in the same way.
}
  \label{fig:causality-example-ri}
\end{figure}

Causal influence diagrams that reflect the full causal structure of
the environment are needed to correctly capture response incentives,
value of control and instrumental control incentives.
We begin with showing this for instrumental control incentives and value of control,
leaving response incentive to the end of this section.
Consider the two influence diagrams in \cref{fig:causality-example}.
If we assume that $W$ really affects $U$, only the diagram in
\cref{fig:causality-example-a} correctly represents this causal structure, whereas
\cref{fig:causality-example-b} lacks the edge $W\to U$.
According to \cref{def:ai-acp:instrumental-goal,def:ai-acp:control-incentive-sa}, $W$ has
positive value of control and an instrumental control incentive.
Only \cref{fig:causality-example-a} gets this right.

The influence diagram literature has discussed weaker notions of causality,
under which \cref{fig:causality-example-b} is considered a valid alternative
representation of the situation described by \cref{fig:causality-example-a}.
For example, if we only consider their joint distributions conditional on
various policies, then
\cref{fig:causality-example-a,fig:causality-example-b}
are identical.
Both diagrams are also in the canonical form of
\citet{heckerman1995decision}, as every variable responsive to the
decision is a descendant of the decision.
For the same reason,
both diagrams are also causal influence diagrams in the terminology of
\citet{heckerman1994decision} and \citet{shachter2010pearl}.
Since only \cref{fig:causality-example-a} gets the incentives right, we see that
the stronger notion of causal influence diagram introduced in this paper is
necessary to correctly model instrumental control incentives and value of control.

To show that response incentives also rely on fully causal influence diagrams,
consider the diagrams in \cref{fig:causality-example-ri}.
Again, we assume that \cref{fig:causality-example-ri-a}
accurately depicts the environment, while
\cref{fig:causality-example-ri-b} has the edge $Y\to W$ reversed.
Again, both diagrams have identical joint distributions
given any policy.
Both diagrams are also causal in the weaker sense of
\citet{heckerman1994decision} and \citet{shachter2010pearl}.
Yet only the fully causal influence diagram in \cref{fig:causality-example-ri-a}
exhibits that $Y$ can have a response incentive or positive value of control.

\section{Value of Information} \label{app:ai-acp:voi}
Materiality can be generalized to nodes not observed,
to assess which variables a decision-maker would benefit from
knowing before making a decision, i.e.\ which variables have value of information \citep{howard1966information,matheson1990using}.
To assess VoI for variables $\incentivevars$,
we first make $\incentivevars$ an observation by adding a link $\incentivevar \to D$ for each $\incentivevar \in \incentivevars$
and then test whether any $\incentivevar$ is material in the updated model \citep{shachter2016decisions}.

\begin{definition}[Value of information] \label{def:ai-acp:observation-incentive-sa}
  Nodes $\incentivevars \subseteq \evars \setminus \Descv{\decisionvar}$ in a
  single-decision SCIM $\scim$ have \emph{VoI}
  if 
  $\attutil(\scim_{\incentivevar \not \to D})
  < \attutil(\scim_{\incentivevar \to D})
  $
  where $\scim_{\incentivevars \to D}$ 
  is obtained from $\scim$ by adding the edges 
  from each $\incentivevar \in \incentivevars$ to $D$, 
  and $\scim_{\incentivevars \not \to D}$ is obtained 
  by removing them.
\end{definition}
Since \cref{def:ai-acp:observation-incentive-sa} adds an information link, it can only be applied to variables $\incentivevars$ that are
non-descendants of the decision, lest cycles be created in the graph.

We will say that a CID $\causalgraph$ \emph{admits} VoI for $\incentivevars$ if $\incentivevars$ has VoI in a
  a SCIM $\scim$ compatible with $\causalgraph$. 
  More generally, for any proposition $\phi$, we will say that 
  $\causalgraph$ admits $\phi$ if 
  there exists any SCIM $\scim$ compatible with $\causalgraph$ 
  that satisfies $\phi$.

%Fortunately, the
%structural functions need not be adapted for the added link,
%since there is no structural function associated with $D$.

An observed variable having positive VoI means that it would 
be material if it was observed.
Using this insight, we can adapt the criterion from \cref{def:ai-acp:requisite-observation} to check for positive VoI.
For a latent variable, we add an edge from it to the decision, 
and then check the graphical criterion.
We prove that this procedure is tight, in that it identifies 
every zero VoI node that can be identified 
from the graphical structure (in a single decision setting).

\begin{restatable}[Value of information criterion]{theorem}{ValueOfInformationCriterion}
  \label{th:voi}
  A single decision CID $\causalgraph$ admits VoI for $\incentivevars \subseteq \evars \setminus \Descv{\decisionvar}$
    if and only if there exists some $\incentivevar \in \incentivevars$ that is a requisite observation in $\causalgraph_{\incentivevars \to D}$, the graph 
    obtained by adding edges from $\incentivevars$ to $D$, to $\causalgraph$.
\end{restatable}

The proof is deferred to \Cref{app:ai-acp:appendix-voi}.

\section{Value of Control} \label{app:ai-acp:voc}
\label{sec:ai-acp:control-incentives}
\enlargethispage{\baselineskip}

So far, we have considered what information an agent would like 
to know, or be influenced by.
We now consider what variables an agent would like to control.
A variable has VoC if a decision-maker could benefit from setting its value \citep{shachter1986evaluating,matheson1990using,shachter2010pearl}.
Concretely, we ask whether the attainable utility can be increased by
letting the agent decide the structural function for the variable.

\begin{definition}[Value of control]\label{def:ai-acp:control-incentive-sa}
  In a single-decision SCIM $\scim$, the set of non-decision nodes
    $\incentivevars$ has \emph{positive value of control}
  if
  \[
    \max_{\pi}\EEs{\pi}[\totutilvar]
    <
    \max_{\pi, \gsw}\EEs{\pi}[\totutilvar_{\gsw}]
    \]
    where $\gsw$ is a set of soft interventions for $\incentivevars$,
    i.e. a new structural function $g^\incentivevar:\sfsig{\incentivevar}$ that respects the graph, 
    for each $\incentivevar \in \incentivevars$.
\end{definition}

%A CID $\causalgraph$ \emph{admits positive value of control} for $\incentivevar$ if
%there exists a SCIM $\scim$ compatible with $\causalgraph$ where $\incentivevar$ has positive value of control.
This can be deduced from the graph, using again the minimal reduction
(\cref{def:ai-acp:reduced-graph}) to rule out effects through observations that an
optimal policy can ignore.~\looseness=-1

\begin{restatable}[Value of control criterion]{theorem}{ValueOfControlCriterion}
  \label{th:soft-sa}
  A single-decision CID $\causalgraph$ admits positive value of control for 
  non-decision vertices $\incentivevars \subseteq \evars \setminus \{D\}$
  if and only if there is a directed path
  $\incentivevar \pathto \utilvar$ for some $\incentivevar \in \incentivevars$ and $\utilvar \in \utilvars$
  in the minimal reduction $\reducedgraph$.
\end{restatable}
 
The proof is supplied in \Cref{app:ai-acp:appendix-voc-criterion}.
% The proof of the completeness direction (\labelcref{app:ai-acp:appendix-voc}) establishes
% that if a path exists,
% then a SCIM be selected where the intervention on $\incentivevars$ can either directly control $U$ 
% or increase the useful information available at $D$.
%TODO: give some proof intuition.

To apply this criterion to the content recommendation example (\cref{fig:fci-application1}), we first obtain the minimal reduction, which is identical
to the original graph.
Since all non-decision nodes are upstream of the utility in the minimal reduction, they all admit positive VoC.
Notably, this includes nodes like \emph{original user opinions} and \emph{model of user opinions}
that the decision has no ability to control according to the graphical structure.
In the next section, we propose \emph{instrumental control incentives}, which
incorporate the agent's limitations.~\looseness=-1

\section{Intent Equivalence} \label{app:intent}
First, let us restate our definition.
\intent*

And here is Halpern's definition, translated into an SCIM setting.

\begin{definition}[Intent; adapted from Def.\ 4.4 of \citep{halpern2018towards}]
In a single-decision SCIM $\calM$,
an agent intends to affect $\incentivevars$ by choosing policy $\spi$
and reference set $\sPi'$
if there exists a superset $\sZ \supseteq \incentivevars$ such that:
a) $\EE[\totutilvar_\spi] < \max_{\spi'} \EE[\totutilvar_{\spi',\sZ_{\spi}}]$, and
b) $\sZ$ is subset-minimal; i.e.\ for any strict subset $\sZ^*$, we have 
$\EE[\totutilvar_\spi] \geq \max_{\spi'} \EE[\totutilvar_{\spi',\sZ^*_{\spi}}]$.
\end{definition}

We now prove that for a non-empty set $\sW$ of variables, Halpern's definition matches our own.

\begin{theorem}
For a non-empty set of variables $\sW$,
the presence of additive Intent is equivalent to an agent intending to affect $\incentivevars$ in Halpern's definition.
\end{theorem}

\begin{proof}
    \emph{Proof that subtractive intent implies Halpern intent}
    If there is additive intent over every $\spi' \in \sPi'$, then 
    $\EE[\totutilvar_\spi] < \EE[\totutilvar_{\spi',\sZ_{\spi}}]$ for every $\spi \in \sPi'$, and so
    $\EE[\totutilvar_\spi] < \max_{\spi'} \EE[\totutilvar_{\spi',\sZ_{\spi}}]$, implying Halpern intent.
    \emph{Proof that Halpern intent implies additive intent}
    To begin with, if $\EE_{\spi'}[\totutilvar] \geq \EE_{\spi^*}[\totutilvar]$, 
    then we would have that $\sZ = \emptyset$ would always satisfy (a), and so there could not exist any 
    non-empty set $\sW$ satisfying Halpern intent.
    Since $\sW$ is assumed to be non-empty, we must therefore have $\EE_{\spi'}[\totutilvar] < \EE_{\spi^*}[\totutilvar]$, 
    satisfying the first condition of additive intent.
    Moreover, if $\EE[\totutilvar_\spi] < \max_{\spi'} \EE[\totutilvar_{\spi',\sZ_{\spi}}]$
    we have $\EE[\totutilvar_\spi] < \EE[\totutilvar_{\spi',\sZ_{\spi}}]$ for every $\spi \in \sPi'$, 
    satisfying the other condition, meaning that additive intent is present.
\end{proof}

\section{Proofs} \label{app:ai-acp:proofs}
\subsection{Preliminaries}
\label{app:ai-acp:appendix_prelims}

Our proofs will rely on the following fundamental results about causal models from
\citep{correa2020calculus},
\citep{galles1997axioms} and \citep{pearl2009causality}.~\looseness=-1

\begin{definition}[Causal Irrelevance]\label{def:ai-acp:causal-irrelevance}
$\sX$ is \emph{causally irrelevant} to $\sY$, given $\sZ$, written
$(\sX \irrelevant \sY | \sZ)$ if, for every set $\sW$ disjoint of
$\sX \cup \sY \cup \sZ$, we have
\begin{align*}
  \forall \eps, \sz, \sx, \sx', \sw \qquad
  \sY_{\sx\sz\sw}(\eps) &= \sY_{\sx'\sz\sw}(\eps)
\end{align*}
\end{definition}
 \begin{lemma}\label{theorem:path-causal-irrelevance}
Recall that $(\sX \nopathto \sY | \sZ)_\causalgraph$ means that $\causalgraph$ contains no 
directed path from $\sX$ to $\sY$, except possibly through $\sZ$.
Then, for every SCM $\causalmodel$ compatible with a DAG $\causalgraph$,
\begin{align*}
  {(\sX \nopathto \sY | \sZ)}_\causalgraph \Rightarrow
  (\sX \irrelevant \sY | \sZ)
\end{align*}
\end{lemma}
 \begin{proof*}
By induction over variables, as in \cite[Lemma~12]{galles1997axioms}.
\end{proof*}

\begin{lemma}[{\citealp[Thm. 3.4.1, Rule 1]{pearl2009causality}}]\label{theorem:do-calc-insertion-of-obs}
For any disjoint subsets of variables $\sW, \sX, \sY, \sZ$ in the DAG
$\causalgraph$,
$\EE(\sY_{\sx} | \sz, \sw) = \EE(\sY_{\sx} | \sw)$
if ${\sY \dsepag{\causalgraph'} \sZ | (\sX, \sW)}$ in the graph $\causalgraph'$
formed by deleting all incoming edges to $\sX$.
\end{lemma}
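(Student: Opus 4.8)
The plan is to identify this lemma as the expectation-valued form of Rule~1 of the do-calculus (insertion/deletion of observations) and to prove it by passing to the submodel $\scm_\sx$. First I would observe that, for any SCM $\scm$ compatible with $\causalgraph$, the submodel $\scm_\sx$ is compatible with precisely the graph $\causalgraph'$ of the hypothesis: replacing $f_X$ by the constant function $X = x$ for each $X \in \sX$ removes exactly the incoming edges to $\sX$ and leaves every other edge untouched. Under this identification the potential responses $\sY_\sx, \sZ_\sx, \sW_\sx$ are simply the endogenous variables of $\scm_\sx$, while $\sX_\sx$ is the constant $\sx$; the conditioning event ``$\mid \sz, \sw$'' is read (following Pearl's $P(y \mid \hat x, z, w)$ convention) as conditioning on $\sZ_\sx = \sz$ and $\sW_\sx = \sw$ within $\scm_\sx$.

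Next I would invoke the soundness of d-separation for SCMs (the causal Markov property; \citealp[Ch.~1]{pearl2009causality}) applied to $\scm_\sx$: the hypothesis $\sY \dsepag{\causalgraph'} \sZ \mid (\sX, \sW)$ gives that $\sY_\sx$ is conditionally independent of $\sZ_\sx$ given $(\sX_\sx, \sW_\sx)$ in $\scm_\sx$. Because $\sX_\sx$ is held constant at $\sx$, conditioning on it is vacuous, so $\sY_\sx \perp \sZ_\sx \mid \sW_\sx$. Hence $\Pr(\sY_\sx = \sy \mid \sz, \sw) = \Pr(\sY_\sx = \sy \mid \sw)$ for every $\sy$ and every $(\sz, \sw)$ of positive probability, and taking expectations over $\sy$ on both sides (the relevant domains being finite and real-valued, or arguing componentwise) yields the claimed identity $\EE(\sY_\sx \mid \sz, \sw) = \EE(\sY_\sx \mid \sw)$.

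The delicate points here are bookkeeping rather than mathematics: one must keep track that the conditioning lives inside the submodel, and one must justify dropping $\sX$ from the d-separation conditioning set --- which is legitimate exactly because $\sX$ is constant in $\scm_\sx$ and so carries no information. Since the sole substantive ingredient is the standard d-separation soundness theorem, the lemma can alternatively be given a one-line attribution to \citet[Rule~1 of Theorem~3.4.1]{pearl2009causality}, with the expectation form an immediate corollary of the distributional one. I expect no genuine obstacle; the only work is in setting up the submodel-to-$\causalgraph'$ identification cleanly so that the d-separation hypothesis transfers without friction.
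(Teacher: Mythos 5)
Your proposal is correct and matches the paper's approach: the paper proves this lemma by direct citation to Rule~1 of Pearl's Theorem~3.4.1 (insertion/deletion of observations), which is exactly the identification you make. Your additional unfolding of the standard derivation --- submodel $\scm_\sx$ is Markov with respect to the mutilated graph $\causalgraph'$, d-separation soundness gives $\sY_\sx \indep \sZ_\sx \mid (\sX_\sx, \sW_\sx)$, conditioning on the constant $\sX_\sx$ is vacuous, and the expectation form follows from the distributional one --- is accurate but simply supplies the textbook proof of the cited result.
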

 \begin{lemma}[{\citealp[Thm. 1.2.4]{pearl2009causality}}]\label{theorem:d-separation-independence}
For any three disjoint subsets of nodes $(\sX, \sY, \sZ)$ in a DAG
$\causalgraph$, $(\sX \dsep_\causalgraph \sY | \sZ)$ if and only if
${(\sX \indep \sY | \sZ)}_P$ for every probability function $P$ compatible with $\causalgraph$.
\end{lemma}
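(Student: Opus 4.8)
The plan is to recognize this as the classical soundness-and-completeness theorem for d-separation (Verma and Pearl for the ``only if'' direction, Geiger and Pearl for the ``if'' direction), which appears as Theorem~1.2.4 in \citet{pearl2009causality}; the shortest honest proof is therefore to cite that result, which is what I would do in the paper. For completeness of the exposition I would nonetheless sketch both directions.

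For soundness --- d-separation implies $(\sX \indep \sY \mid \sZ)_P$ in every compatible $P$ --- I would work from the recursive factorization $P(\evars) = \prod_{\evar \in \evars} P(\evar \mid \Pa_\evar)$ that defines compatibility. The cleanest route uses the moralization characterization of d-separation: $(\sX \dsep_\causalgraph \sY \mid \sZ)$ holds if and only if $\sX$ and $\sY$ are separated by $\sZ$ in the undirected graph obtained from the subgraph of $\causalgraph$ induced on the ancestral set $\mathrm{An}(\sX \cup \sY \cup \sZ)$ by joining any two nodes with a common child in that subgraph and then dropping orientations. First I would verify this purely graph-theoretic equivalence. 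Then I would observe that $P$ restricted to $\mathrm{An}(\sX \cup \sY \cup \sZ)$ still factorizes over the DAG restricted to that set (variables outside it are non-ancestors, so their factors sum to one and drop out), hence factorizes over the moral graph, i.e.\ is an undirected Markov field on it; for such a field, graph separation implies conditional independence, and transporting this back yields $(\sX \indep \sY \mid \sZ)_P$. An alternative that avoids moralization is induction on a topological order of $\causalgraph$, building the global Markov property from the ordered/local Markov property via the semi-graphoid axioms: symmetry, decomposition, weak union, and contraction.

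For completeness --- d-connection implies $(\sX \not\indep \sY \mid \sZ)_P$ for some compatible $P$ --- I would fix a path $p$ from a node $X \in \sX$ to a node $Y \in \sY$ that is active given $\sZ$, and construct $P$ by choosing the conditional probability tables so that dependence propagates along $p$ while nothing else interferes: give every variable not needed for $p$ a table that ignores its parents (making it marginally independent of everything), and along $p$ use near-deterministic ``copy'' mechanisms so information flows through chains $A \to B \to C$ and out of forks $A \gets B \to C$. At a collider $A \to B \gets C$ on $p$, activeness means $B$ or some descendant of $B$ lies in $\sZ$; I would give $B$ an additive- or XOR-type mechanism of its two relevant parents (plus a little noise, copied deterministically down to the conditioned descendant) so that fixing the conditioned value constrains the pair of parents jointly, thereby opening the collider.

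The step I expect to be the main obstacle is exactly this completeness construction: one must argue that the dependence manufactured along $p$ is not cancelled when the remaining variables are marginalized --- multiple active paths, or the chosen path under a bad parameter choice, could in principle cancel destructively. Making this rigorous is what makes the Geiger--Pearl argument nontrivial; the clean resolution is to invoke their theorem directly, or equivalently the existence of \emph{faithful} distributions (for generic CPT parameters, $P$ realizes \emph{exactly} the independences entailed by d-separation and no others). Since everything downstream in this paper uses the lemma only as a black box, citing \citet[Thm.~1.2.4]{pearl2009causality} is the route I would actually take.
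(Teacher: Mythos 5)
Your proposal matches the paper exactly: the paper's entire proof of this lemma is a citation to \citet[Theorem~1.2.4]{pearl2009causality}, which is precisely the route you identify as the one you would take. Your supplementary sketches of the moralization/factorization argument for soundness and the path-based construction for completeness are accurate descriptions of the standard proofs but are not needed here, since the lemma is used downstream only as a black box.
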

 \begin{lemma}[{\citealp[Sigma Calculus Rule 3]{correa2020calculus}}]\label{le:sigma-calculus-intervention}
For any disjoint subsets of nodes $(\sX, \sY) \subseteq \sV$ and $\sZ \subseteq \sV$ in a DAG $\causalgraph$
$\Pr(\sX | \sZ;g^Y) = \Pr(\sX|\sZ;g'^Y)$
if $\sX \dsep \sY | \sZ$ in $\causalgraph_{\overline{\sY(\sZ)}}$
where $\sY(\sZ) \subseteq \sY$ is the set of elements in $\sY$ that are not ancestors of $\sZ$ in $\causalgraph$
and $\causalgraph_{\overline{\sW}}$ denotes $\causalgraph$ but with edges incoming to variables in $\sW$ removed.\end{lemma}

\subsection{An optimal policy that respects the minimal reduction}\label{app:ai-acp:appendix-minimal-reduction}
\newcommand*{\Padg}{\Pad_\causalgraph}
\newcommand*{\padg}{\pad_\causalgraph}

\newcommand*{\Padr}{\ensuremath{\Pad_\text{min}}}
\newcommand*{\padr}{\ensuremath{\pad_\text{min}}}
\newcommand*{\Padnr}{\ensuremath{\Pad_-}}
\newcommand*{\padnr}{\ensuremath{\pad_-}}
\newcommand*{\tpadnr}{\ensuremath{\varidx{\tilde{\pa}}{\decisionvar}_-}}

\newcommand*{\utildvars}{\utilvarsd}
\newcommand*{\utilndvars}{\utilvarsv{\setminus\decisionvar}}
\newcommand*{\tudvar}{\varidx{\totutilvar}{\decisionvar}}
\newcommand*{\tundvar}{\varidx{\totutilvar}{\setminus\decisionvar}}

First, we introduce the notion of a $\rcid$-respecting optimal policy. Our proof of its
optimality is similar to Theorem 3 from \citep{lauritzen2001representing}.
It builds on the following intersection property of d-separation.
\begin{lemma}[d-separation \emph{intersection} property]
  \label{le:d-sep-intersection-property}
  For all disjoint sets of variables $\sW$, $\sX$, $\sY$, and $\sZ$,
  \begin{equation*}
    (\sW \dsepg \sX | \sY, \sZ) \land
    (\sW \dsepg \sY | \sX, \sZ) \Rightarrow
    (\sW \dsepg (\sX \cup \sY) | \sZ)
  \end{equation*}
\end{lemma}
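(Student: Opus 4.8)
The plan is to give a direct graphical argument by contradiction, using a shortest-path extremal choice rather than routing through the probabilistic intersection property (which would need a strictly positive distribution, not available for an arbitrary DAG). Suppose, contrary to the conclusion, that $\sW \not\dsepg (\sX \cup \sY) \mid \sZ$. Then by \cref{def:d-separation} there is a path $p$ from some $w \in \sW$ to some $v \in \sX \cup \sY$ that is not d-separated (``active'') given $\sZ$, meaning every non-collider of $p$ lies outside $\sZ$ and every collider of $p$ has itself or a descendant in $\sZ$. Among all active-given-$\sZ$ paths from $\sW$ to $\sX \cup \sY$, I would fix one, $p = (n_0, \dots, n_k)$ with $n_0 = w \in \sW$ and $n_k = v$, of minimal length. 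Without loss of generality $v \in \sX$; the case $v \in \sY$ is handled identically, swapping the roles of the two hypotheses.

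The key observation is a monotonicity fact about d-separation: enlarging the conditioning set of an active path can only block it by introducing a conditioned non-collider; it can never falsify a collider's activation condition, since any collider already has itself or a descendant in the smaller set and hence in the larger one. Applying this with the hypothesis $\sW \dsepg \sX \mid \sY \cup \sZ$: the path $p$ runs from $\sW$ to $v \in \sX$, so $p$ must be d-separated by $\sY \cup \sZ$; since $p$ is active given $\sZ \subseteq \sY \cup \sZ$, this block must come from a non-collider $n_j$ of $p$ with $n_j \in \sY$.

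Because $\sW$, $\sX$, $\sY$ are pairwise disjoint, $n_j$ is neither endpoint of $p$ (those lie in $\sW$ and $\sX$), so $1 \le j \le k-1$. The contiguous sub-path $p' = (n_0, \dots, n_j)$ is then a path from $w \in \sW$ to $n_j \in \sY \subseteq \sX \cup \sY$ of length $j < k$, and it is active given $\sZ$: each internal node $n_1, \dots, n_{j-1}$ has the same two neighbours in $p'$ as in $p$, hence the same collider/non-collider status, and already satisfied the activity conditions with respect to $\sZ$. This contradicts the minimality of $p$, so no active-given-$\sZ$ path from $\sW$ to $\sX \cup \sY$ exists, i.e.\ $\sW \dsepg (\sX \cup \sY) \mid \sZ$. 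I expect the only delicate point to be the monotonicity observation of the second paragraph --- spelling out precisely why adding variables to the conditioning set cannot deactivate a previously active collider --- together with the routine bookkeeping that $p'$ is a genuine path with unchanged local structure at its internal nodes; both are short once \cref{def:d-separation} is unpacked.
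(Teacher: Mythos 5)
Your proof is correct and takes essentially the same route as the paper's: both argue by contraposition, take an active path from $\sW$ to $\sX \cup \sY$ given $\sZ$, and truncate it to a sub-path that contradicts one of the two hypotheses. The only difference is organizational --- the paper truncates at the first node of $\sX \cup \sY$ the path reaches and then invokes the relevant hypothesis, whereas you take a minimal active path, use a hypothesis to locate a blocking non-collider in $\sY$, and truncate there; your write-up also makes explicit the collider-monotonicity point that the paper leaves implicit.
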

 \begin{proof}
Suppose that the RHS is false, so there is a path from $\sW$ to $\sX \cup
\sY$ conditional on $\sZ$.
This path must have a sub-path that passes from $\sW$ to $X \in \sX$ without
passing through $\sY$ or to $Y \in \sY$ without passing through $\sX$ (it must
traverse one set first).
But this implies that $\sW$ is d-connected to $\sX$ given $\sY,\sZ$ or to $\sY$
given $\sX,\sZ$, meaning the LHS is false.
So if the LHS is true, then the RHS must be true.
\end{proof}

\begin{lemma}[$\rcid$-respecting optimal policy] \label{le:reduced-optimal-policy}
    Every single-decision SCIM $\scim = \scimdef$ has an optimal policy
    $\tilde\pi$ that depends only on requisite observations.
    In other words, $\tilde\pi$ is also a policy for the minimal model
    $\rscim = {\left\langle \rcid, \exovars, \structfns, \exoprob \right\rangle}$.
    We call $\tilde\pi$ a \emph{$\rcid$-respecting optimal policy}.
\end{lemma}

This result is already known from \citep{lauritzen2001representing,fagiuoli1998note}, but we prove it here to make the paper more self-contained.

\begin{proof}
First partition $\Padg$ into the requisite parents
$\Padr = \{W \in \Pad:W \not \dsepg \utildvars \mid \{\decisionvar\} \cup \Pad \setminus \{W\}\}$, and non-requisite parents $\Padnr = \Padg \setminus \Padr$.

Let $\pi^*$ be an optimal policy in $\scim$.
To construct a $\rcid$-respecting version $\tilde\pi$,
select any value $\tpadnr \in \dom(\Padnr)$ for which $\Prs{\pi^*}(\Padnr = \tpadnr) > 0$.
For all $\padr\in\dom(\Padr)$ and $\exovalv{\decisionvar}\in\dom(\exovarv{\decisionvar})$, let
\begin{align*}
  \tilde\pi (\padr, \padnr, \exovalv{\decisionvar})
  \coloneqq \pi^*(\padr ,\tpadnr, \exovalv{\decisionvar}).
\end{align*}

The policy $\tilde\pi$ is permitted in $\rscim$ because it does not vary with $\Padnr$.

Now let us prove that $\tilde \pi$ that is optimal in $\scim$.
Partition $\utilvar$ into $\utildvars = \utilvars \cap \Descv{\decisionvar}$
and $\utilndvars = \utilvars \setminus \Descv{\decisionvar}$.
$\decisionvar$ is causally irrelevant for every $\utilvar \in \utilndvars$
so every policy $\pi$ (in particular, $\tilde\pi$) is optimal with respect to
$\tundvar \coloneqq \sum_{\utilvar \in \utilndvars}{\utilvar}$.

We now consider $\utildvars$.
By definition, $W \dsepg \utildvars \mid \{\decisionvar\} \cup \Pad \setminus \{W\}$
for every ${W \in \Padnr}$.
By inductively applying the intersection property of d-separation (\cref{le:d-sep-intersection-property}) over elements
of $\Padnr$ we obtain
\begin{equation}\label{eq:padx-indep-tudvar}
    \Padnr \dsep \utildvars \mid \{\decisionvar\} \cup \Padr.
\end{equation}

Next, we establish that
$\EEs{\tilde\pi}[\tudvar] = \EEs{\pi^*}[\tudvar]$ by showing that
${\EEs{\tilde\pi}[\tudvar \mid \pad]} \!=\!
{\EEs{\pi^*}[\tudvar \mid \pad]}$ for every
$\pad \in \dom(\Pad)$
with $\Pr(\pad) > 0$.
First, the expected utility of $\tilde\pi$ given any $(\padr, \padnr)$
with $\Pr({\Padr \!=\! \padr}, {\padnr \!=\! \padnr}) > 0$ is equal
to the expected utility of $\pi^*$ on input $(\padr, \tpadnr)$:
\begin{align*}
  \mathrlap{\EEs{\tilde\pi}[\tudvar \mid \padr, \padnr]}
  \hspace{0.5cm} &
  \\
  &= \sum_{u, \decisionval}{\begin{aligned}[t]\Big(
    &u
    \Pr(\tudvar = u \mid \decisionval, \padr, \padnr) \\
    &\cdot \Prs{\tilde\pi}(\decisionvar = d \mid \padr, \padnr)
  \Big)\end{aligned}}
  \\
  &
  = \sum_{u, \decisionval}{\begin{aligned}[t]\Big(
    &u
    \Pr(\tudvar = u \mid \decisionval, \padr, \tpadnr) \\
    &\cdot \Prs{\pi^*}(\decisionvar = \decisionval \mid \padr, \tpadnr)
  \Big)\end{aligned}}
  \\
  &
    = \EEs{\pi^*}[\tudvar \mid \padr, \tpadnr]
    \intertext{where the middle equality follows from \cref{eq:padx-indep-tudvar} and the
  definition of $\tilde\pi$.
  Second, the expected utility of $\pi^*$ given input $\tpadnr$ is the same as
    its expected utility on any input $\padnr$:
}
  &
  = \max_{\decisionval}{
    \EEs{\pi^*}[\tudvar_\decisionval \mid \padr, \tpadnr]
  }
  \\
  &
  = \max_{\decisionval}{
    \EEs{\pi^*}[\tudvar_\decisionval \mid \padr, \padnr]
  }
  \\
  &
  = \EEs{\pi^*}[\tudvar \mid \padr, \padnr ]
\end{align*}
where the first equality follows from the optimality of $\pi^*$ and
the second from \cref{theorem:do-calc-insertion-of-obs}.
The expression $\EEs{\pi^*}[\tudvar_\decisionval \mid\cdots]$ means that we
first assign the policy $\pi^*$ then intervene to set
$\decisionvar = \decisionval$, which renders $\pi^*$ effectively irrelevant but
formally necessary for creating an SCM.
This result shows that $\tilde\pi$ is optimal for $\tudvar$ and has
${\EEs{\tilde\pi}[\tudvar]} = {\EEs{\pi^*}[\tudvar]}$.
Since $\tilde\pi$ is optimal for both $\totutilvard$ and $\tundvar$, $\tilde\pi$ is optimal in $\scim$.
\end{proof}

\subsection{Response Incentive Criterion} \label{app:ai-acp:appendix-ri}
We now prove the soundness and completeness of the
response incentive criterion.

\ResponseIncentiveCriterion*

\begin{proof}[Proof of \Cref{theorem:ri-graph-criterion}]
  We first prove that the criterion is sound, and then that it is complete.
  
\textbf{Soundness (the \emph{only if} direction)}.
For the soundness direction,
assume that for $\cid$, the minimal reduction $\rcid$ contains no directed path $\incentivevar \pathto \decisionvar$ 
for any $\incentivevar \in \incentivevars$.
Let $\scim = \scimdef$ be any SCIM compatible with $\cid$.
Let $\rscim = \left\langle \rcid,\exovars,\structfns,\exoprob \right\rangle$
be $\scim$, but with the minimal reduction $\rcid$.
By \cref{le:reduced-optimal-policy} in \cref{app:ai-acp:proofs}, there exists a $\rcid$-respecting policy $\tilde \pi$ that is optimal in $\scim$.
In $\rscim_{\tilde \pi}$,
$\incentivevars$ is causally irrelevant for $\decisionvar$, so $\decisionvar(\exovals) = \decisionvar_{g^\incentivevars}(\exovals)$.
Furthermore, $\scim_{\tilde \pi}$ and $\rscim_{\tilde \pi}$ are the same SCM,
with the functions $\structfns \cup \{\tilde \pi\}$.
So $\decisionvar(\exovals) = \decisionvar_{g^\incentivevars}(\exovals)$ also in
$\scim_{\tilde\pi}$, which means that there is an optimal policy in $\scim$ that
does not respond to interventions on $\incentivevars$ for any $\exovals$.

\textbf{Completeness (the \emph{if} direction).} 
\Cref{fig:ri-completeness-2} illustrates the model constructed in the proof.

% \begin{lemma}[Response Incentive Criterion Completeness]\label{theorem:ri-graph-criterion-completeness}
% If for some $\incentivevar \in \incentivevars$, 
% a path $\incentivevar \pathto \decisionvar$ is in the minimal reduction $\reducedgraph$
% of a single-decision CID $\cid$ then there is a response incentive on
% $\incentivevars$ in at least one SCIM $\scim$ compatible with $\cid$.

% \end{lemma}
\newcommand*{\pwu}{{\overline{\riparvar\utilvar}}}
\newcommand*{\pdu}{{\overrightarrow{\decisionvar\utilvar}}}
\newcommand*{\pxd}{\overrightarrow{\incentivevar\decisionvar}}
\newcommand*{\wusrc}{S}
\newcommand*{\wucol}{C}
\newcommand*{\wuobs}{O}
\newcommand*{\pco}[1]{\ensuremath{\overrightarrow{\wucol^{#1}\wuobs^{#1}}}}
\newcommand*{\riinter}{\ensuremath{\incentivevar = 0}}

\begin{figure}[H]
  \centering
    \begin{influence-diagram}[
      every node/.append style = {circle},
      node distance=9mm
  ]
    \node[] (Sl) {$\wusrc^m$};
    \node[below right = of Sl] (Olp) {$\wucol^{m}$};
    \node[right = of Olp] (Ol) {$\wuobs^{m}$};
    \node[below = of Olp, yshift=-5mm] (\wuobs1p) {$\wucol^{1}$};
    \node[right = of O1p] (O1) {$\wuobs^{1}$};
    \node[below left = of O1p] (S0) {$\wusrc^0$};
    \node[below right = of S0, draw=none] (Oh)  {};
    \node[right = of Oh] (O) {$\riparvar$};
    \node[left = of O] (Z) {$Z$};
    \node[below left = of Z] (X) {$\incentivevar$};

    \node[draw=none] (sd) at ($(Sl)!0.5!(S0)$)  {$\rvdots$};
    \node[draw=none] at ($(Ol)!0.5!(O1)$)  {$\rvdots$};
    \node[draw=none] at ($(Olp)!0.5!(O1p)$)  {$\rvdots$};

    \node[decision, right = of O1] (A)  {$\decisionvar$};
    \node[right = of Sl, xshift=4cm] (Up)  {$Y$};
    \node[right = of Up, utility] (U)  {$\utilvar$};

    \path[dashed, ultra thick]
    (Sl) edge[->] (Up)
    (Sl) edge[->] (Olp)
    (S0) edge[->] (O1p)
    (S0) edge[->] (Z)
    (Z) edge[->] (O)
    (X) edge[->] (Z)
    (A) edge[->] (Up)
    (Up) edge[->] (U)
    (sd.south) edge[->] (O1p)
    (sd.north) edge[->] (Olp)
    ;

    \path[dashed]
    (Olp) edge[->] (Ol)
    (O1p) edge[->] (O1)
    ;

    \path
    (Ol) edge[->, information] (A)
    (O1) edge[->, information] (A)
    (O) edge[->, information] (A)
    ;

    \begin{scope}[
      node distance = 1mm,
      every node/.style = {rectangle, draw=none},
      ]
    \node[above = of Up] {\small $Y=\wusrc^m \cdot D$};
    \node[below = of O1]  {\small $\wuobs^1=\wucol^1$};
    \node[above = of Ol] {\small $\wuobs^m=\wucol^m$};
    \node[below = of U] {\small $\utilvar=Y$};
    \node[below = of O] {\small $\riparvar = Z$};
    \node[left = of Z] {\small $Z = \wusrc^0 \cdot \incentivevar$};
    \node[left = of X] {\small $\incentivevar = 1$};
    \node[left = of Olp] {\small $\wucol^m = \wusrc^{m-1}\cdot \wusrc^{m}$};
    \node[left = of Sl] {\small $\wusrc^m\sim \textrm{Uniform}(\{-1,1\})$};
    \node[left = of S0] {\small $\wusrc^0\sim \textrm{Uniform}(\{-1,1\})$};
    \node[left = of O1p] {\small $\wucol^1 = \wusrc^0\cdot \wusrc^1$};
    \node[right = of A] {\small choose $\decisionvar \in \{-1, 0, 1\}$};
    \end{scope}

  \end{influence-diagram}
  \caption{
    Outline of the variables involved in the response incentive construction.
    Every graph that satisfies the response incentive graphical criterion
    contains this structure (allowing all dashed paths except
    those to $\wucol^i$ or $Y$ to have length zero).
    An optimal policy for the given model is
    $D = \riparvar \cdot \prod_i \wuobs^i = \wusrc^m$,
    yielding utility $U = Y = \incentivevar(\wusrc^m)^2 = 1$,
    and all optimal policies must depend on the value of $\riparvar$.
  }\label{fig:ri-completeness-2}
\end{figure}
Starting from the assumption that there exists $\incentivevar \in \incentivevars$ with 
$\incentivevar \pathto \decisionvar$ in
$\rcid$, we explicitly construct a compatible model for $\cid$
for which the decision of every optimal policy causally depends on the value of
$\incentivevar$.
Let $\pxd$ be a directed path from $\incentivevar$ to $\decisionvar$
that only contains a single requisite observation that we label
$\riparvar$ (if $\incentivevar$ is itself a requisite observation, then $\incentivevar$ and $\riparvar$ are the
same node).
Since $\riparvar$ is a requisite observation for $\decisionvar$, there exists
some utility node $\utilvar$ descending from $\decisionvar$ that is d-connected
to $\riparvar$ in $\cid$ when conditioning on $\reqobscond$.
Let $\pdu$ be a directed path from $\decisionvar$ to $\utilvar$ and
let $\pwu$ be a path between $\riparvar$ and $\utilvar$ that is active
when conditioning on $\reqobscond$.
By the definition of d-connecting paths,
$\pwu$ has the following structure ($m \ge 0$):
\begin{center}
\begin{tikzpicture}[scale=0.9]
\node at (0, 0) (W) {$\riparvar$};
  \node at (1, 1) (S0) {$\wusrc^0$};
  \draw (W) edge[<-, dashed] (S0);
  \node at (2, 0) (O1) {$\wucol^1$};
  \draw (S0) edge[->, dashed] (O1);
  \node at (3, 1) (S1) {$\wusrc^1$};
  \draw (O1) edge[<-, dashed] (S1);
  \node[minimum height = 1.5em, minimum width = 2em, draw=none] at (4, 0)
    (O2) {};
  \draw (S1) edge[->, dashed] (O2);
  \node at (4.5, 0.5) {$\cdots$};
  \node[minimum height = 1.5em, minimum width = 2em, draw=none] at (5, 1)
    (Sm1) {};
  \node at (6, 0) (Om) {$\wucol^m$};
  \draw (Sm1) edge[->, dashed] (Om);
  \node at (7, 1) (Sm) {$\wusrc^m$};
  \draw (Om) edge[<-, dashed] (Sm);
  \node at (8, 0) (U) {$\utilvar$};
  \draw (Sm) edge[->, dashed] (U);
\end{tikzpicture}
\end{center}
consisting of directed sub-paths leaving source nodes $\wusrc^i$ and entering
collider nodes $\wucol^i$, where there is a directed path from each collider to
$\reqobscond$ and no non-collider node is in $\reqobscond$.
It may be the case that $\riparvar$ and $\wusrc^0$ are the same node.
For each $i \in \{1, \ldots, m\}$, let $\pco{i}$ be a directed path from
$\wucol^i$ to some $\wuobs^i \in \Pad$ such that no other node along $\pco{i}$
is in $\Pad$.

We make the following assumptions without loss of generality:
\begin{itemize}
  \item $\pwu$ first intersects $\pdu$ at some variable $Y$ (possibly $Y$
    is $\utilvar$) and thereafter both $\pwu$ and $\pdu$ follow the same directed
    path from $Y$ to $U$ (otherwise, let $Y$ be the first intersection point and
    replace the $Y \pathto U$ sub-path of $\pwu$ with the $Y \pathto U$ sub-path
    of $\pdu$).
  \item The $\wusrc^0 \pathto \riparvar$ sub-path of reversed $\pwu$ first intersects $\pxd$
    at some node $Z$ and thereafter both follow the same directed path from $Z$
    to $\riparvar$ (same argument as for $Y$).
  \item The paths $\pco{i}$ are mutually non-intersecting
    (if there is an intersection between $\pco{i}$ and $\pco{j}$ with $j \ne i$
    then replace the part of $\pwu$ between $\wucol^i$ and $\wucol^j$ with the
    path through the intersection point, which becomes the new collider; this
    can only happen finitely many times as it reduces the number of collider
    nodes).
\end{itemize}
The resulting structure is shown in \cref{fig:ri-completeness-2}.

We now formally define the model represented in the figure.
The domains of all endogenous variables are set to $\{-1, 0, 1\}$.
All exogenous variables are given independent discrete uniform
distributions over $\{-1, 1\}$.
Unless otherwise specified, we set $B = A$ for each edge $A \to B$ within the
directed paths shown in \cref{fig:ri-completeness-2},
i.e. $\fv{B}(\pav{B}, \exovalv{B}) = a$.
Nodes at the heads of directed paths can therefore be defined in terms of nodes
at the tails.
We begin by describing functions for the ``default'' case depicted by
\cref{fig:ri-completeness-2}, and discuss adaptations for various special cases
below.
\begin{itemize}
\item $\wusrc^i = \exovarv{\wusrc^i}$, giving
  $\wusrc^i$ a uniform distribution over $-1$ and $1$.
\item $U = Y$, and
\item $Y = \wusrc^m \cdot D$, so $D$ must match $S^m$ to optimize utility.
\item $\wucol^i = \wusrc^{i - 1} \cdot \wusrc^i$, and
\item $\wuobs^i = \wucol^i$, so the collider $\wucol^i$
  reveals (only) whether $\wusrc^{i - 1}$ and $\wusrc^i$ have the same sign or
  not.
\item $\incentivevar=1$,
\item $Z=\incentivevar\cdot S^0$, and
\item $\riparvar=Z$, so $\riparvar$ reflects the value of $S^0$, unless $\incentivevar$ is intervened upon.
\end{itemize}
All other variables not part of any named path are set to $0$.

Special cases arise when two or more of the labeled nodes in 
\cref{fig:ri-completeness-2} refer to the same variable.
When $\riparvar$, $Y$, or $O^i$ is the same node as one of its parents,
then it simply takes the function of this parent (instead of copying its value).
Meanwhile, the $S^i$, $C^i$, and $Y$ nodes must be distinct by construction, so
no special cases treatment is required.
Finally, the functions for $\incentivevar$, $S^0$ and $Z$ are adapted per the following cases:

\emph{Case 1: $\incentivevar$, $S^0$, and $Z$ are all the same node}.
Let
$\incentivevar=Z=S^0=\exovarv{S^0}$, i.e.\ the node takes a uniform distribution over
$\{-1, 1\}$.

\emph{Case 2: $Z$ is the same node as $S^0$, but different from $\incentivevar$}.
In this case, let $Z=S^0=\incentivevar\cdot \exovarv{S^0}$.

\emph{Case 3: $\incentivevar$ is the same node as $Z$, but different from $S^0$}.
In this case, let $\incentivevar=Z=S^0$.

The final combination of $\incentivevar$ and $S^0$ being the same, while different from $Z$,
cannot happen by the definition of $Z$.

Regardless of which case applies, an optimal policy is
$\decisionvar = \riparvar \cdot \prod_{i=1}^m{\wuobs^i}$,
which yields a utility of $1$.

Let $g^{\incentivevars}$ be the intervention $\doo(W=0)$.
Formally, $g^{\incentivevars}$ has $g^\incentivevar$
deterministically set ${\incentivevar = 0}$, 
and applies the unchanged function 
$g^{\incentivevar'}=f^{\incentivevar'}$
for the other variables $\incentivevar' \in \incentivevars \setminus \{\incentivevar\}$.
Under $g^{\incentivevars}$, it follows that $\riparvar_{\riinter} = Z_{\riinter} = 0$.
Without the information in $\riparvar$, $\wusrc^m$ is independent of
${(\Pad)}_{\riinter}$ and hence is independent of $\decisionvar_{\riinter}$
regardless of the selected policy.\footnote{
  Note that if $m = 0$ and $\wusrc^0$ is $Z$ then
  ${(\wusrc^m)}_{\riinter} = 0$ but the fact that this is predictable is
  irrelevant because we compare
  $\decisionvar_{\riinter}$ against the pre-intervention variable $\wusrc^m$,
  which remains independent of ${(\Pad)}_{\riinter}$.}
Therefore,
$\EEs{\pi}[\utilvar_{\decisionvar_{\riinter}}]
= \EEs{\pi}[\wusrc^m \cdot \decisionvar_{\riinter}]
= \EEs{\pi}[\wusrc^m] \cdot \EEs{\pi}[\decisionvar_{\riinter}]
= 0$
for every
policy $\pi$.
In particular, for any optimal policy $\pi^*$,
$\EEs{\pi^*}[\utilvar_{\decisionvar_{\riinter}}] \ne
\EEs{\pi^*}[\utilvar] = 1$.
Thus, there must be some $\exovals$ such that
$\decisionvar_{\riinter}(\exovals) \ne \decisionvar(\exovals)$.
And by the definition of $g^\incentivevars$, we have that
$\decisionvar_{g^\incentivevars}(\exovals)=\decisionvar_{\riinter}(\exovals)$, 
so there is a response incentive on $\incentivevar$.
\end{proof}

\subsection{Materiality Criterion} \label{app:ai-acp:appendix-materiality}

We begin by restating the graphical criterion for materiality.

\MaterialityCriterion*
%TODO: proof of materiality criterion
%TODO: use \textbf{Soundness.} \textbf{Completeness.} in all proofs

The proof is as follows.

\begin{proof}
\textbf{Soundness.}
Assume that $V$ is nonrequisite for $D$.
There always exists an optimal policy that respects $\rcid$ (\cref{le:reduced-optimal-policy})
and this policy 
is also permitted in $\calM_{V \not \to D}$ 
since $\rcid$ does not contain $V \not \to D$,
so this policy achieves $\attutil(\calM)$,
proving that $V$ is immaterial.

\textbf{Completeness.}
By assumption, $W \dsep U \mid \Pa{D} \setminus \{W\}$.
So, we construct the same model as in 
the proof of \Cref{theorem:ri-graph-criterion}, 
for the special case where $W$ and $X$ are the same node, 
as shown in \cref{fig:materiality-completeness}.

Clearly the policy $D=X \cdot \prod_{i=1}^m O^i$ 
still yields a utility of $1$, which is optimal.

Let $\pa^D_{\setminus W}$ be an arbitrary assignment to the parents of 
$D$ except $W$, and let $s^m$ be an assignment to $S^m$.
Notice that if we have an assignment,
$S^m=s^m,\Pa^D_{\setminus W}=\pa^D_{\setminus W}$, this uniquely identifies 
an assignment to the variables $S^0:m$, because 
$S^{i-1} = S^i \oplus O^i$ for every $1\leq i \leq m$.
It follows that: 

$P(S^m=1,\pa^D_{\setminus W}) = P(S^m=-1,\pa^D_{\setminus W}) = \frac{1}{2}.$

and so, for any observations $\pa^D_{\setminus W}$, we have

$$P(S^m=1 \mid \pa^D_{\setminus W}) = P(S^m=-1 \mid \pa^D_{\setminus W}) = \frac{1}{2}.$$

Therefore, a deterministic policy can map each $\pa^D_{\setminus W}$, 
to $1$ or $-1$, 
in which case we will have 
$P(U=1 \mid \pa^D_{\setminus W})=P(U=-1 \mid \pa^D_{\setminus W})=\frac{1}{2}$, 
and so $\EE[U \mid \pa^D_{\setminus W}]=1$, 
or to $o$, in which case $U=0$ always.

It follows that marginalising across every $\pa^D_{\setminus W}$, any 
deterministic policy will obtain $\EE[U]=0$.

Furthermore, the best stochastic policy never outperforms the best deterministic policy
\citep[Proposition 1]{lee2020characterizing}.

Hence, the attainable utility when $W$ is not observed is $0$, 
whereas the attainable utility when $W$ is observed is $1$, proving the result.

\newcommand*{\pwu}{{\overline{\riparvar\utilvar}}}
\newcommand*{\pdu}{{\overrightarrow{\decisionvar\utilvar}}}
\newcommand*{\pxd}{\overrightarrow{\incentivevar\decisionvar}}
\newcommand*{\wusrc}{S}
\newcommand*{\wucol}{C}
\newcommand*{\wuobs}{O}
\newcommand*{\pco}[1]{\ensuremath{\overrightarrow{\wucol^{#1}\wuobs^{#1}}}}
\newcommand*{\riinter}{\ensuremath{\incentivevar = 0}}
\begin{figure}[H]
  \centering
    \begin{influence-diagram}[
      every node/.append style = {circle},
      node distance=9mm
  ]
    \node[] (Sl) {$\wusrc^m$};
    \node[below right = of Sl] (Olp) {$\wucol^{m}$};
    \node[right = of Olp] (Ol) {$\wuobs^{m}$};
    \node[below = of Olp, yshift=-5mm] (\wuobs1p) {$\wucol^{1}$};
    \node[right = of O1p] (O1) {$\wuobs^{1}$};
    \node[below left = of O1p] (S0) {$\wusrc^0$};
    \node[below right = 0mm and 25mm of S0] (O) {$\incentivevar$};

    \node[draw=none] (sd) at ($(Sl)!0.5!(S0)$)  {$\rvdots$};
    \node[draw=none] at ($(Ol)!0.5!(O1)$)  {$\rvdots$};
    \node[draw=none] at ($(Olp)!0.5!(O1p)$)  {$\rvdots$};

    \node[decision, right = of O1] (A)  {$\decisionvar$};
    \node[right = of Sl, xshift=4cm] (Up)  {$Y$};
    \node[right = of Up, utility] (U)  {$\utilvar$};

    \path[dashed, ultra thick]
    (Sl) edge[->] (Up)
    (Sl) edge[->] (Olp)
    (S0) edge[->] (O1p)
    (S0) edge[->] (O)
    (A) edge[->] (Up)
    (Up) edge[->] (U)
    (sd.south) edge[->] (O1p)
    (sd.north) edge[->] (Olp)
    ;

    \path[dashed]
    (Olp) edge[->] (Ol)
    (O1p) edge[->] (O1)
    ;

    \path
    (Ol) edge[->, information] (A)
    (O1) edge[->, information] (A)
    (O) edge[->, information] (A)
    ;

    \begin{scope}[
      node distance = 1mm,
      every node/.style = {rectangle, draw=none},
      ]
    \node[above = of Up] {\small $Y=\wusrc^m \cdot D$};
    \node[below = of O1]  {\small $\wuobs^1=\wucol^1$};
    \node[above = of Ol] {\small $\wuobs^m=\wucol^m$};
    \node[below = of U] {\small $\utilvar=Y$};
    \node[below = of O] {\small $\incentivevar = \wusrc^0$};
    \node[left = of Olp] {\small $\wucol^m = \wusrc^{m-1}\cdot \wusrc^{m}$};
    \node[left = of Sl] {\small $\wusrc^m\sim \textrm{Uniform}(\{-1,1\})$};
    \node[left = of S0] {\small $\wusrc^0\sim \textrm{Uniform}(\{-1,1\})$};
    \node[left = of O1p] {\small $\wucol^1 = \wusrc^0\cdot \wusrc^1$};
    \node[right = of A] {\small choose $\decisionvar \in \{-1, 0, 1\}$};
    \end{scope}

  \end{influence-diagram}
  \caption{
    The materiality construction.
}\label{fig:materiality-completeness}
\end{figure}
\end{proof}

\subsection{VoI Criterion} \label{app:ai-acp:appendix-voi}

We begin by restating the graphical criterion for value of information. 

\ValueOfInformationCriterion*
 
\begin{proof}
Let $\calM_{W \to D}$ be a SCIM
identical to $\calM$ except that an edge is added from $W \to D$
if one is not present already, and let $\calG_{W \to D}$ be 
its associated graph.
Notice that positive value of information in $\calM$ 
and materiality in $\calM_{W \to D}$, 
are both equivalent to
$\attutil(\calM_{W \not \to D}) < \attutil(\calM_{W \to D})$.
So, $\calG$ is compatible with positive 
value of information 
precisely when $\calG_{W \to D}$ 
is compatible with materiality, 
i.e.\ when $W \not \perp U(D) \mid \Pa^D \setminus W$ in $\calG_{W \to D}$.
\end{proof}

\subsection{Instrumental Control Incentive Criterion} \label{app:ai-acp:appendix-ici}
We first restate the ICI criterion.

\InstrumentalControlIncentiveCriterion*

The proof is as follows.

\begin{proof}We first prove the soundness direction, followed by the completeness direction.

\textbf{Soundness (the \emph{only if} direction)}.
Assume that there is no path $D \pathto W \pathto U$.
We will prove that the nested counterfactual has no effect,
\begin{equation}
{\totutilvar(\exovals)} =
    {\totutilvar_{\incentivevar_{\decisionval}}(\exovals).} \tag{*}
    \end{equation}
    and therefore that there is no instrumental control incentive.

Let $\scim$ be any SCIM compatible with $\cid$ and $\pi$ any policy for $\scim$.
Let $\sW' = \sW \cap \Descv{\decisionvar}$.
By \cref{theorem:path-causal-irrelevance}, 
$\sU_{\sW_\decisionval}(\exovals)=\sU_{\sW'_\decisionval}(\exovals)$ for all $\exovals$.
The variables $\sW'$ must be non-descendants of $\sU$ by assumption, 
so \cref{theorem:path-causal-irrelevance} implies that 
$\sU_{\sW'_\decisionval}(\exovals)=\sU(\exovals)$ for all $\exovals$. So (*) holds.

%If there is no directed path
%$\decisionvar \pathto \incentivevar \pathto \utilvars$ in $\cid$
%for any $\incentivevar \in \incentivevars$
%then for each $\incentivevar$, we have either
%${\decisionvar \nopathto \incentivevar}$ or
%${\incentivevar \nopathto \utilvars}$.
%If ${\decisionvar \nopathto \incentivevar}$, then
%${\incentivevar_{\decisionval}(\exovals)} =
%    {\incentivevar(\exovals)}$
%    for any setting $\exovals \in \dom(\exovars)$ and decision $\decisionval$
%(\cref{theorem:path-causal-irrelevance}).

%Therefore, ${\totutilvar(\exovals)}
%    = {\totutilvar_{\incentivevars_{\decisionval}}(\exovals)}$.

%Similarly, if ${\incentivevar \nopathto \utilvars}$ then
%    ${\utilvar(\exovals)} =
%    {\utilvar_\incentiveval(\exovals)}$ for every
%    setting $\exovals \in \dom(\exovars)$,
%$\incentiveval \in \dom(\incentivevar)$ and $\utilvar \in \utilvars$
%    so ${\totutilvar(\exovals)} =
%    {\totutilvar_{\incentivevar_{\decisionval}}(\exovals)}$.

From (*), we have $\EEs{\pi}[\totutilvar \mid \pad]
= \EEs{\pi}[\totutilvar_{\incentivevars_\decisionval} \mid \pad]$,
so $\incentivevars$ has no ICI.

\textbf{Completeness (the \emph{if} direction).}
Assume that $\cid$ contains a directed path
$\decisionvar = Z^0 \to Z^1 \to \cdots \to Z^n = \utilvar$ where
$\utilvar \in \utilvars$ and
$Z^i \in \incentivevars$ for one or more $i \in \{0, \ldots, n\}$.
Let $j$ be the highest integer where $Z^j \in \incentivevars$, and 
note that $\incentivevars$ are assumed to be non-decisions, so we have $j > 0$.
We construct a compatible SCIM for which there is an instrumental control incentive
on $\incentivevars$, as well as additive and subtractive intent.
Let all variables along the path $Z^0 \to \ldots \to Z^n$ be
equal to their predecessor, except $Z^0 = \decisionvar$, which has no structural
function. All other variables are set to $0$.
In this model, $\utilvar \ceq \decisionvar \in \{0, 1\}$ and all other utility
variables are always $0$, so the only optimal policy is
$\pi^*(\pad) = 1$, which gives
${\EEs{\pi^*}[\totutilvar \mid \Pad=\bm{0}] = 1}$.
Meanwhile, $Z^j_{\decisionval=0}=0$, 
and under the intervention $Z^j=0$  this value is copied along to $\utilvar$, so
$\utilvar_{\incentivevars_{\decisionval}} = 0$, and hence
${\EEs{\pi^*}[\totutilvar_{\incentivevars_{\decisionval=0}} \mid
\Pad = \bm{0}] = 0}$, so there is an ICI.
\end{proof}

\subsection{Intent Criterion} \label{app:ai-acp:appendix-intent}
We begin by restating the graphical criterion for intent.
\IntentCriterion*

The proof is as follows.

\begin{proof}
We will first prove soundness, and then completeness.

\textbf{Soundness.}
As there is no path $D \pathto \incentivevar \pathto \utilvar$
for any $\incentivevar \in \incentivevars,\utilvar \in \utilvars$, 
equation (*) 
holds, by the same argument as in the proof of \cref{theorem:ici-graph-criterion}
(i.e.\ the nested counterfactual has no effect). 
We will then prove that there is:
 (a) no additive intent, and (b) no subtractive intent.

\emph{Proof of (a).}
Let us assume (*) and that additive intent is present, and we will prove a contradiction:
\begin{align*}
\EE_{\spi'}[\totutilvar_{\incentivevars_{\spi^*}}] &= \EE_\spi[\totutilvar] & (by (*)) \\
& < \EE_{\spi^*}[\totutilvar]&(\text{def. of intent} \\
& \leq \EE_{\spi'}[\totutilvar_{\incentivevars_{\spi^*}}], & (\text{\cref{eq:reason-to-move}}
\end{align*}
giving a contradiction. So it follows from (1) that there is no additive intent.

\emph{Proof of (b).}
Let us assume (*) and that subtractive intent is present and we will prove a contradiction:
\begin{align*}
\EE_{\spi^*}[\totutilvar_{\incentivevars_{\spi'}}] &= \EE_{\spi^*}[\totutilvar] & (by (*)) \\
& > \EE_{\spi'}[\totutilvar]& (\text{def. of intent}) \\
& \geq \EE_{\spi^*}[\totutilvar_{\incentivevars_{\spi^*}}] & (\text{\cref{eq:reason-not-to-move})}
\end{align*}
giving a contradiction. So there is no subtractive intent.

\textbf{Completeness.}
Consider the graph constructed in the proof of completeness 
for ICI (\cref{theorem:ici-graph-criterion}).
Letting ${\spi'}$ be the policy that chooses $D=0$, 
the same argument implies that
$0=\EE_{\spi'}[\totutilvar] <\EE_{\spi^*}[\totutilvar] = 1$ 
and $1=\EE_{\spi'}[\totutilvar_{\incentivevars_{\spi^*}}]\geq \EE_{\spi^*}[\totutilvar]=1$, 
which means that there is an additive intent to influence $\incentivevars$.
If we instead treat $\spi^*$ as the baseline policy and intervene ${\spi'}$, then by similar reasoning 
we have that
$0=\EE_{\spi'}[\totutilvar] <\EE_{\spi^*}[\totutilvar] = 1$ and
$0=\EE_{\spi^*}[\totutilvar_{\incentivevars_{\spi'}}] \leq \EE_{\spi'}[\totutilvar] = 0$, so there is subtractive intent.
\end{proof}\ryan{Is this proof explicit enough?}

\subsection{Impact Incentive Criterion} \label{app:ai-acp:appendix-ii}
We begin by restating the impact incentive criterion.
\ImpactIncentiveCriterion*

The proof is as follows.

\begin{proof} 
\textbf{Soundness.}
If $\incentivevars \cap \Desc(D)=\emptyset$, then by sigma calculus rule 3 \citep{correa2020calculus}, 
$\incentivevars_\spi(\seps)$ is invariant to $\spi$, and
$\incentivevar_{\spi}(\seps)=\incentivevar_{\spi'}(\seps)$, 
for all $\seps$.
Since $\delta$ is a distance function, 
it maps matching arguments to $0$,
so for any $c>0$, there is no impact incentive.
If $U \not \in \Desc(D)$, then similarly, $U$ is invariant to $\spi$, so every policy is optimal, and 
for any chosen baseline policy $\spi'$, there exists optimal $\spi=\spi'$, 
so as in the previous case,
 $\delta(\incentivevar_\spi(\seps),\incentivevar_{\spi'}(\seps))=0$ for all $\seps$, 
and there is no impact incentive.

\textbf{Completeness.}
By assumption, let $\calG$ be an arbitrary graph that contains the paths $X \pathfrom D$
and $\pathto U$ 
for some $\incentivevar \in \incentivevars$.
Then, define the model $\scim$ where $D \in \{0,1\}$ and the value of $D$ is copied along the paths to $\incentivevar$ and $U$, 
and all other variables are assigned a trivial domain.
To see that this yields in an impact incentive, 
note that to achieve $\mathbb{E}[U]=1$, any optimal policy $\spi$ must have 
$\incentivevar(\seps)=1$ for every $\seps$ with $P(\seps)>0$,
whereas the baseline policy $\spi'$ that always chooses $D=0$ has 
$\incentivevar(\seps)=0$ for all $\seps$.
%$P_{\spi'}(\incentivevar=0)=1$.
Since $\delta$ is a distance measure, it follows that
$\delta(\incentivevar_\spi(\seps), \incentivevar_{\spi'}(\seps))>0$, 
and so there exists some $c$ for which there is an impact incentive.
\end{proof}

\subsection{Value of Control Criterion}\label{app:ai-acp:appendix-voc-criterion}

We first restate the criterion.

\ValueOfControlCriterion*

The proof is as follows.

\begin{proof}
\textbf{Soundness.}
    The proof of \emph{only if} (soundness) is as follows.
Let $\scim = \scimdef$ be a single-decision SCIM.
    Let $\scims{\gsw}$ be $\scim$, but with the structural functions $\fv{W}$ for $\incentivevar \in \incentivevars$
    replaced with $g^\incentivevar$.
    Let $\rscim$ and $\rscim_{\gsw}$ be the same SCIMs, respectively, but
    replacing each graph with the minimal reduction $\reducedgraph$.

    Recall that $ \EEs{\pi}[\totutilvar_{\gsw}]$ is defined by applying the soft
    interventions $\gsw$ to the (policy-completed) SCM $\scims{\pi}$.
    However, this is equivalent to applying the policy $\pi$ to the modified
    SCIM $\scims{\gsw}$, as the resulting SCMs are identical.
    Since $\scims{\gsw}$ is a SCIM, \cref{le:reduced-optimal-policy} can be applied,
    to find a $\reducedgraph$-respecting optimal policy $\tilde\pi$ for
    $\scims{\gsw}$.

    Consider now the expected utility under an arbitrary intervention $\gsw$
    for a policy $\pi$ optimal for $\scims{\gsw}$:
\begin{align*}
&\EEs{\pi}[\totutilvar_{\gsw}] \text{ in $\scim$} \\
      &=\EEs{\pi}[\totutilvar] \text{ in $\scims{\gsw}$} & \text{by SCM equivalence} \\
      &=\EEs{\tilde\pi}[\totutilvar] \text{ in $\scims{\gsw}$} & \text{by \cref{le:reduced-optimal-policy}}\\
      &=\EEs{\tilde\pi}[\totutilvar] \text{ in $\rscim_{\gsw}$} & \text{since $\tilde\pi$ is $\reducedgraph$-respecting} \\
        &=\EEs{\tilde\pi}[\totutilvar] \text{ in $\rscim$} & \text{by \cref{le:sigma-calculus-intervention}} \\
       &=\EEs{\tilde\pi}[\totutilvar] \text{ in $\scim$} & \text{only increasing the policy set} \\
       &\leq \max_{\pi^*}\EEs{\pi^*}[\totutilvar] \text{ in $\scim$} & \text{$\max$ dominates all elements.}
    \end{align*}
    This shows that $\incentivevars$ lack value of control.

\textbf{Completeness.}
Assume that $\incentivevar$ is an ancestor of some $\utilvar \in \utilvars$ for
some $\incentivevar \in \incentivevars$
and fix a particular directed path $\rho$ from $\incentivevar$ to some utility
$\utilvar \in \utilvars$.
We consider two cases depending on whether $\decisionvar$ is in $\rho$
and construct a SCIM for each:

\emph{Case 1: $\rho$ does not contain $\decisionvar$.}
Let the domain of all variables be $\{0, 1\}$.
Set all exogenous variable distributions arbitrarily.
Set $\structfns$ such that $\incentivevar = 0$ with every other variable along
$\rho$ copying the value of $\incentivevar$ forward. All remaining variables
are set to the constant $0$.
In this model, an intervention $g^\incentivevars$ that sets $\incentivevar$ to
$1$ instead of $0$, while assigning every other $\incentivevar' \in \incentivevars \setminus \{\incentivevar\}$
the unchanged function $g^{\incentivevar'}=f^{\incentivevar'}$,
increases the total expected utility by $1$, which means
there is an instrumental control incentive for $\incentivevar$.

\emph{Case 2: $\rho$ contains $\decisionvar$.}
This implies that a directed path $\incentivevar \to \decisionvar$ is present in
$\rcid$ so we can construct (a modified version of) the response incentive
construction used in the proof of completeness for
\cref{theorem:ri-graph-criterion}.
We make one change: instead of starting with
$\fv{\incentivevar}(\cdot) = 1$ we start with
$\fv{\incentivevar}(\cdot) = 0$.
As noted in the response incentive completeness proof, this means that
$S_m$ is independent of $\Pad$ so regardless of the policy the optimal
attainable utility is $0$.
If we perform the intervention $g^\incentivevars$
such that $\incentivevar=1$
and assign every other $\incentivevar' \in \incentivevars \setminus \{\incentivevar\}$
the unchanged function $g^{\incentivevar'}=f^{\incentivevar'}$
then the
expected utility is $1$ once again so the intervention $g^\incentivevars$
strictly increases the optimal expected utility.
\end{proof}

\subsection{Counterfactual Fairness}\label{app:ai-acp:appendix-fairness}
\theoremcffair*
\newcommand*{\supps}[1]{\ensuremath{\supp_{#1}}}
\begin{proof}
  We begin by showing that if there exists an optimal policy $\pi$ that is
  counterfactually fair, then there is no response incentive on $A$.
To this end, let
  \begin{align*}
    \supps{\pi}(D\mid \pad) &= \{ d\mid \Prs{\pi}(D = d\mid\pad)>0 \}\\
    \forall a,\  \supps{\pi}(D_a\mid \pad) &= \{ d\mid \Prs{\pi}(D_a = d\mid\pad)>0 \}
  \end{align*}
  be the sets of decisions taken by $\pi$ with positive probability with and
  without an intervention on $A$.
  As a first step, we will show that for any $\exovals\in\dom(\exovars)$ and
  any intervention $a$ on $A$,
  \begin{equation}
    \label{eq:supp}
    \supps{\pi}\big(D\mid \Pad(\exovals)\big)
    = \supps{\pi}\big(D_a\mid \Pad(\exovals)\big).
  \end{equation}
  By way of contradiction, suppose there exists a decision
  \begin{equation}
    \label{eq:set-diff}
    d\in
    \supps{\pi}\big(D\mid \Pad(\exovals)\big)\setminus
    \supps{\pi}\big(D_a \mid \Pad(\exovals)\big).
\end{equation}
  Since $d\in \supps{\pi}\big(D\mid \Pad(\exovals)\big)$,
  we have
  \begin{equation}
    \label{eq:pad-geq}
    \Prs{\pi}\left(D=d\mid \Pad(\exovals), A(\exovals)\right)> 0.
  \end{equation}
  And since $d\not\in\supps{\pi}\big(D_a\mid \Pad(\exovals)\big)$,
  there exists no $\exovals'$ with positive probability
  such that $\Pad(\exovals') = \Pad(\exovals)$,
  $A(\exovals') = A(\exovals)$,
  and $D_a(\exovals') = d$.
  This gives
  \begin{equation}
    \label{eq:pada-eq}
    \Prs{\pi}\left(D_{a}=d\mid \Pad(\exovals), A(\exovals)\right) = 0.
  \end{equation}
  \Cref{eq:pad-geq,eq:pada-eq} violate the counterfactual fairness property,
  \cref{eq:cf},
  which shows that \cref{eq:set-diff} is impossible.
  An analogous argument shows that
  $d\in \supps{\pi}\big(D_a\mid \Pad(\exovals)\big)\setminus
  \supps{\pi}\big(D\mid \Pad(\exovals)\big)$
  also violates the counterfactual fairness property \cref{eq:cf}.
  We have thereby established \cref{eq:supp}.

  Now select an arbitrary ordering of the elements of $\dom(D)$ and define a new
  policy $\pi^*$ such that $\pi^*(\pad)$ is the minimal element of
  $\supps{\pi}(D\mid \pad)$.
  Then $\pi^*$ is optimal because $\pi$ is optimal.
  Further, $\pi^*$ will make the same decision in
  decision contexts $\Pad(\exovals)$ and $\Pad_a(\exovals)$
  because of \cref{eq:supp}.
  In other words, $D_a(\exovals) = D(\exovals)$ in $\scim_{\pi^*}$ for the
  optimal policy $\pi^*$, which means that there is no response incentive on
  $\{A\}$.

  Now we prove the reverse direction --- that if there is no response incentive then some optimal $\pi^*$ is counterfactually fair.
  Choose any optimal policy $\pi^*$ where $D_a(\exovals)=D(\exovals)$ for all $\exovals$.
  Since an intervention $\do(A=a)$ cannot change $D$ in any setting, $\Pr(D_{a}=d\mid
  \cdot)=\Pr(D=d\mid \cdot)$ for any condition and any decision $d$, hence $\pi^*$ is counterfactually fair.
\end{proof}

%uncomment next line to change bibliography name to references
%\renewcommand{\bibname}{References}
%\bibliography{refs}        %use a bibtex bibliography file refs.bib
%\bibliographystyle{plain}  %use the plain bibliography style

\end{document}